\newcommand{\bell}{\mathrm{b}}
\newcommand{\norm}[1]{\left\Vert#1\right\Vert}
\newcommand{\set}[1]{\left\{#1\right\}}
\newcommand{\brac}[1]{\left [#1\right ]}
\newcommand{\ip}[1]{\left \langle #1 \right \rangle }
\newcommand{\Real}{\mathbb R}
\newcommand{\too}{\rightarrow}
\newcommand{\diag}{\textrm{diag}} %diagonal matrix
\newcommand{\trace}{\textrm{tr}} %trace
\newcommand{\one}{\mathbf{1}}
\newcommand{\vcc}[1]{\mathrm{vec}(#1)}
\newcommand{\mat}[1]{\bm{[} #1 \bm{]}}%{\brac{#1}}
\newtheorem*{rep@theorem}{\rep@title}
\newcommand{\newreptheorem}[2]{%
\newenvironment{rep#1}[1]{%
 \def\rep@title{#2 \ref{##1}}%
 \begin{rep@theorem}}%
 {\end{rep@theorem}}}
\newtheorem{theorem}{Theorem}
\newtheorem{proposition}{Proposition}
\newtheorem{corollary}{Corollary}
\newcommand{\subalign}[1]{%
  \vcenter{%
    \Let@ \restore@math@cr \default@tag
    \baselineskip\fontdimen10 \scriptfont\tw@
    \advance\baselineskip\fontdimen12 \scriptfont\tw@
    \lineskip\thr@@\fontdimen8 \scriptfont\thr@@
    \lineskiplimit\lineskip
    \ialign{\hfil$\m@th\scriptstyle##$&$\m@th\scriptstyle{}##$\crcr
      #1\crcr
    }%
  }
}
\newcommand{\eg}{{e.g.}}
\newcommand{\ie}{{i.e.}}
\newcommand{\revision}[1]{{\color{black} #1}}
\def\eqref#1{equation~\ref{#1}}
\def\1{\bm{1}}
\def\va{{\bm{a}}}
\def\vb{{\bm{b}}}
\def\ve{{\bm{e}}}
\def\vv{{\bm{v}}}
\def\vx{{\bm{x}}}
\def\vy{{\bm{y}}}
\def\mA{{\bm{A}}}
\def\mH{{\bm{H}}}
\def\mI{{\bm{I}}}
\def\mL{{\bm{L}}}
\def\mM{{\bm{M}}}
\def\mP{{\bm{P}}}
\def\mQ{{\bm{Q}}}
\def\mX{{\bm{X}}}
\def\mY{{\bm{Y}}}
\DeclareMathAlphabet{\mathsfit}{\encodingdefault}{\sfdefault}{m}{sl}
\SetMathAlphabet{\mathsfit}{bold}{\encodingdefault}{\sfdefault}{bx}{n}
\newcommand{\tens}[1]{\bm{\mathsfit{#1}}}
\def\tA{{\tens{A}}}
\def\tB{{\tens{B}}}
\def\tC{{\tens{C}}}
\def\tE{{\tens{E}}}
\def\tT{{\tens{T}}}
\def\tX{{\tens{X}}}
\def\tY{{\tens{Y}}}
\def\tZ{{\tens{Z}}}
\def\gG{{\mathcal{G}}}
\def\sV{{\mathbb{V}}}
\newcommand{\R}{\mathbb{R}}
\DeclareMathOperator*{\argmax}{arg\,max}
\title{Invariant and Equivariant Graph Networks}
\author{Haggai Maron, Heli Ben-Hamu, Nadav Shamir \& Yaron Lipman \\
Department of Computer Science and Applied Mathematics\\
Weizmann Institute of Science\\
Rehovot, Israel \\
%\texttt{\{hippo,brain,jen\}@cs.cranberry-lemon.edu} \\
}
\begin{document}

\maketitle

%\vspace{-15pt}

\begin{abstract} 

Invariant and equivariant networks have been successfully used for learning images, sets, point clouds, and graphs. A basic challenge in developing such networks is finding the maximal collection of invariant and equivariant \emph{linear} layers. Although this question is answered for the first three examples (for popular transformations, at-least), a full characterization of invariant and equivariant linear layers for graphs is not known. 

\revision{In this paper we provide a characterization of all permutation invariant and equivariant linear layers for (hyper-)graph data, and show that their dimension, in case of edge-value graph data, is $2$ and $15$, respectively}. More generally, for graph data defined on $k$-tuples of nodes, the dimension is the $k$-th and $2k$-th Bell numbers. Orthogonal bases for the layers are computed, including generalization to multi-graph data. The constant number of basis elements and their characteristics allow successfully applying the networks to different size graphs. From the theoretical point of view, our results generalize and unify recent advancement in equivariant deep learning. In particular, we show that our model is capable of approximating any message passing neural network.\\
Applying these new linear layers in a simple deep neural network framework is shown to achieve comparable results to state-of-the-art and to have better expressivity than previous invariant and equivariant bases.

%
% we provide a full classification and explicit formulas of linear invariant and equivariant layers for graphs. We use these layers to construct deep invariant and equivariant  networks. We show that surprisingly, the number of parameters in each layer is independent of the graph size, allowing us to train and apply our models on graphs of different sizes.\haggai{discuss generalization to multiple node set?} We tested our models on the problem of graph classification, achieving state of the art results on multiple challenges. 

\end{abstract}

\section{Introduction}
We consider the problem of graph learning, namely finding a functional relation between input graphs (more generally, hyper-graphs) $\gG^\ell$ and corresponding targets $T^\ell$, \eg, labels. As graphs are common data representations, this task received quite a bit of recent attention in the machine learning community \cite{Bruna2013,Henaff2015,monti2017geometric,Ying2018}.

More specifically, a (hyper-)graph data point $\gG=(\sV,\tA)$ consists of a set of $n$ nodes $\sV$, and values $\tA$ attached to its \emph{hyper-edges}\footnote{A hyper-edge is an ordered subset of the nodes, $\sV$}. These values are encoded in a tensor $\tA$. The order of the tensor $\tA$, or equivalently, the number of indices used to represent its elements, indicates the type of data it represents, as follows: First order tensor represents \emph{node-values} where $\tA_i$ is the value of the $i$-th node; Second order tensor represents \emph{edge-values}, where $\tA_{ij}$ is the value attached to the $(i,j)$ edge; in general, $k$-th order tensor encodes \emph{hyper-edge-values}, where $\tA_{i_1,\ldots,i_k}$ represents the value of the hyper-edge represented by $(i_1,\ldots,i_k)$. 
 For example, it is customary to represent a graph using a binary adjacency matrix $\tA$, where $\tA_{ij}$ equals one if vertex $i$ is connected to vertex $j$ and zero otherwise. We denote the set of order-$k$ tensors by $\Real^{\scriptscriptstyle n^k}$.%\tA\in\Real^{\overbrace{\scriptscriptstyle n\times n\times \cdots \times n}^k.}$$
 
 The task at hand is constructing a functional relation $f(\tA^\ell)\approx T^\ell$, where $f$ is a neural network. If $T^\ell=t^\ell$ is a single output response then it is natural to ask that $f$ is \emph{order invariant}, namely it should produce the same output regardless of the node numbering used to encode $\tA$. For example, if we represent a graph using an adjacency matrix $\tA=\mA\in\Real^{n\times n}$, then for an arbitrary permutation matrix $\mP$ and an arbitrary adjacency matrix $\mA$, the function $f$ is order invariant if it satisfies $f(\mP^T \mA \mP) = f(\mA)$. If the targets $T^\ell$ specify output response in a form of a tensor, $T^\ell=\tT^\ell$, then it is natural to ask that $f$ is \emph{order equivariant}, that is, $f$ commutes with the renumbering of nodes operator acting on tensors. Using the above adjacency matrix example, for every adjacency matrix $\mA$ and every permutation matrix $\mP$, the function  $f$ is equivariant if it satisfies $f(\mP^T\mA \mP)=\mP^T f(\mA) \mP$.   
 To define invariance and equivariance for functions acting on general tensors $\tA\in\Real^{\scriptscriptstyle n^k}$ we use the \emph{reordering operator}: $\mP\star \tA$ is defined to be the tensor that results from renumbering the nodes $\sV$ according to the permutation defined by $\mP$. Invariance now reads as $f(\mP\star \tA) = f(\tA)$; while equivariance means $f(\mP \star \tA) = \mP \star f(\tA)$. Note that the latter equivariance definition also holds  for functions between different order tensors, $f:\Real^{\scriptscriptstyle n^k} \too \Real^{\scriptscriptstyle n^l} $. 

\begin{figure}[t]
\begin{center}
%\framebox[4.0in]{$\;$}
\includegraphics[width=\textwidth, keepaspectratio]{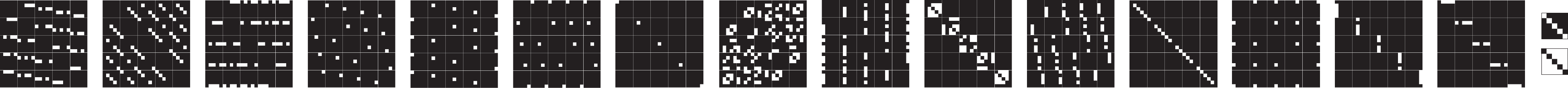} \vspace{-20pt}
\end{center}
\caption{The full basis for equivariant linear layers for edge-value data $\tA\in\Real^{n\times n}$, for $n=5$. The purely linear $15$ basis elements, $\tB^\mu$, are represented by matrices $n^2\times n^2$, and the $2$ bias basis elements (right), $\tC^\lambda$, by matrices $n\times n$, see {\protect \eqref{e:basis_general}}.}\label{fig:basis_k_2}\vspace{-17pt}
\end{figure}

 Following the standard paradigm of neural-networks where a network $f$ is defined by alternating compositions of linear layers and non-linear activations, we set as a goal to characterize all \emph{linear} invariant and equivariant layers.
The case of node-value input $\tA=\va\in\Real^n$ was treated in the pioneering works of  \citet{zaheer2017deep,qi2017pointnet}. These works characterize all linear \revision{permutation} invariant and equivariant operators acting on node-value (\ie, first order) tensors, $\Real^n$. In particular it it shown that the linear space of invariant linear operators $L:\Real^n\too \Real$ is of dimension one, containing essentially only the sum operator, $L(\va)=\alpha\one^T \va$. The space of equivariant linear operators $L:\Real^n\too \Real^n$ is of dimension two, $L(\va)=\brac{\alpha \mI + \beta (\one \one^T-\mI)}\va$.

The general equivariant tensor case was partially treated in \citet{Kondor2018} where the authors make the observation that the set of standard tensor operators: product, element-wise product, summation, and contraction are all equivariant, and due to linearity the same applies to their linear combinations. However, these do not exhaust nor provide a full and complete basis for \emph{all} possible tensor equivariant linear layers. 

\begin{wrapfigure}[6]{r}{3.8cm}
\vspace{-0.2cm}  
\hspace{-0.1cm}
  \includegraphics[width=3.5cm]{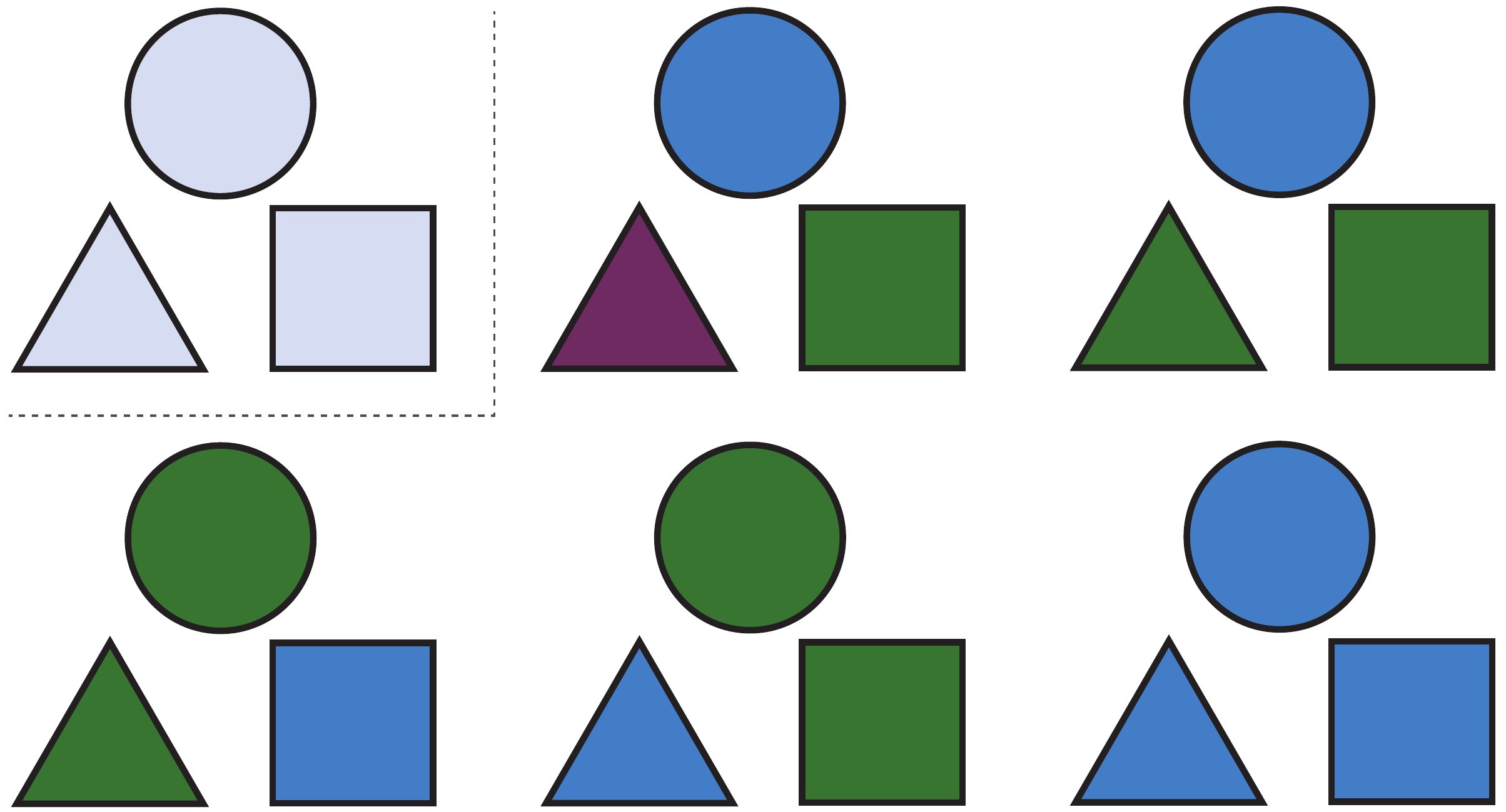}
\end{wrapfigure}
In this paper we provide a full characterization of \revision{permutation} invariant and equivariant linear layers for general tensor input and output data. We show that the space of invariant linear layers $L:\Real^{\scriptscriptstyle n^k} \too \Real$ is of dimension $\bell(k)$, where $\bell(k)$ is the $k$-th \emph{Bell number}. The $k$-th Bell number is the number of possible partitions of a set of size $k$; see inset for the case $k=3$. Furthermore, the space of equivariant linear layers $L:\Real^{\scriptscriptstyle n^k} \too \Real^{\scriptscriptstyle n^{l}}$ is of dimension $\bell(k+l)$. Remarkably, this dimension is independent of the size $n$ of the node set $\sV$. This allows applying the same network on graphs of different sizes. For both types of layers we provide a general formula for an orthogonal basis that can be readily used to build linear invariant or equivariant layers with maximal expressive power. 
%Taking into account biases and feature channels we eventually show:
%\begin{theorem}\label{thm:char_of_inv_equi_one_node_set_with_bias_and_features}
%The space of invariant (equivariant) linear layers $\Real^{\scriptscriptstyle n^k,d}\too \Real^{\scriptscriptstyle d'} $ ($\Real^{\scriptscriptstyle n^k \times d}\too \Real^{\scriptscriptstyle n^k \times d'}$) is of dimension $dd'\bell(k)+d'$ ($dd'\bell(2k)+d'\bell(k)$) with basis elements defined in \eqref{e:basis_general}; equations \ref{e:layer_inv1}-\ref{e:layer_inv2} (\ref{e:layer_equi1}-\ref{e:layer_equi2}) show the general form of such layers.
%%
%%	A linear layer $\Real^{\scriptscriptstyle n^k,d}\too \Real^{\scriptscriptstyle d'} $ ($\Real^{\scriptscriptstyle n^k \times d}\too \Real^{\scriptscriptstyle n^k \times d'} $) is invariant (equivariant) iff it is of the form of equations \ref{e:layer_inv1}-\ref{e:layer_inv2} (\ref{e:layer_equi1}-\ref{e:layer_equi2}). 
%\end{theorem}
%
Going back to the example of a graph represented by an adjacency matrix $\mA\in\Real^{n\times n}$ we have $k=2$ and the linear invariant layers $L:\Real^{n\times n}\too \Real$ have dimension $\bell(2)=2$, while linear equivariant layers $L:\Real^{n\times n}\too\Real^{n \times n}$ have dimension $\bell(4)=15$. Figure \ref{fig:basis_k_2} shows visualization of the basis to the linear equivariant layers acting on edge-value data such as adjacency matrices.

In \citet{Hartford2018} the authors provide an impressive generalization of the case of node-value data to several node sets, $\sV_1,\sV_2,\ldots,\sV_m$ of sizes $n_1,n_2,\ldots,n_m$. Their goal is to learn interactions across sets. That is, an input data point is a tensor $\tA\in \Real^{n_1\times n_2\times\cdots\times n_m}$ that assigns a value to each element in the cartesian product $\sV_1\times \sV_2\times \cdots \times \sV_m$. Renumbering the nodes in each node set using permutation matrices $\mP_1,\ldots,\mP_m$ (resp.) results in a new tensor we denote by $\mP_{1:m}\star \tA$. Order invariance means $f(\mP_{1:m}\star \tA)=f(\tA)$ and order equivariance is $f(\mP_{1:m}\star \tA)=\mP_{1:m}\star f(\tA)$. \citet{Hartford2018} introduce bases for linear invariant and equivariant layers. Although the layers in \citet{Hartford2018} satisfy the order invariance and equivariance, they do not exhaust all possible such layers in case some node sets coincide. For example, if $\sV_1=\sV_2$ they have $4$ independent learnable parameters where our model has the maximal number of $15$ parameters.

Our analysis allows generalizing the multi-node set case to arbitrary tensor data over $\sV_1\times \sV_2\times \cdots \times \sV_m$. Namely, for data points in the form of a tensor $\tA\in\Real^{\scriptscriptstyle n_1^{k_1}\times n_2^{k_2} \times \cdots \times n_m^{k_m}}$. The tensor $\tA$ attaches a value to every element of the Cartesian product $\sV_1^{k_1}\times\cdots\times\sV_2^{k_2}$, that is, $k_1$-tuple from $\sV_1$, $k_2$-tuple from $\sV_2$ and so forth. We show that the linear space of invariant linear layers $L:\Real^{\scriptscriptstyle n_1^{k_1}\times n_2^{k_2} \times \cdots \times n_m^{k_m}}\too \Real$ is of dimension $\prod_{i=1}^m \bell(k_i)$, while the equivariant linear layers $L:\Real^{\scriptscriptstyle n_1^{k_1}\times n_2^{k_2} \times \cdots \times n_m^{k_m}}\too \Real^{\scriptscriptstyle n_1^{l_1}\times n_2^{l_2} \times \cdots \times n_m^{l_m}}$ has dimension $\prod_{i=1}^m \bell(k_i+l_i)$. We also provide orthogonal bases for these spaces. Note that, for clarity, the discussion above disregards biases and features; we detail these in the paper. 

\revision{In appendix \ref{appendix:msg_pass} we show that our model is capable of approximating any message-passing neural network as defined in \cite{Gilmer2017} which encapsulate several popular graph learning models. One immediate corollary is that the universal approximation power of our model is not lower than message passing neural nets. }

 In the experimental part of the paper we concentrated on possibly the most popular instantiation of graph learning, namely that of a single node set and edge-value data, \eg, with adjacency matrices. We created simple networks by composing our invariant or equivariant linear layers in standard ways and tested the networks in learning invariant and equivariant graph functions: (i) \revision{We compared identical networks with our basis and the basis of \citet{Hartford2018} and showed we can learn graph functions like trace, diagonal, and maximal singular vector. The basis in \citet{Hartford2018}, tailored to the multi-set setting, cannot learn these functions demonstrating it is not maximal in the graph-learning (\ie, multi-set with repetitions) scenario.} We also demonstrate our representation allows extrapolation: learning on one size graphs and testing on another size; (ii) We also tested our networks on a collection of graph learning datasets, achieving results that are comparable to the state-of-the-art in 3 social network datasets. \vspace{-5pt}

\section{Previous work}\vspace{-0pt}
Our work builds on two main sub-fields of deep learning: group invariant or equivariant networks, and deep learning on graphs. Here we briefly review the relevant works.\vspace{-10pt}
\paragraph{Invariance and equivariance in deep learning.} In many learning tasks the functions that we want to learn are invariant or equivariant to certain symmetries of the input object description. Maybe the first example is the celebrated \emph{translation invariance} of Convolutional Neural Networks (CNNs) \citep{lecun1989backpropagation,krizhevsky2012imagenet}; in this case, the image label is invariant to a translation of the input image. In recent years this idea was generalized to other types of symmetries such as rotational symmetries \citep{cohen2016group,Cohen2016,Weiler2018,Welling2018}.
\citet{cohen2016group} introduced Group Equivariant Neural Networks that use a generalization of the convolution operator to groups of rotations and reflections; \citet{Weiler2018,Welling2018} also considered rotational symmetries but in the case of 3D shapes and spherical functions. 
%More related to our work is the works of \cite{zaheer2017deep,qi2017pointnet} that considered the scenario of learning on unordered sets or point clouds and came up with similar models that are invariant to the group of permutations (\eg to reordering of the set or points). In order to construct complicated invariant functions both papers suggest to stack multiple equivariant layers and a final invariant layer. In this work we also consider invariance to order (of the hyper graph vertices).
%
\citet{Ravanbakhsh2017} showed that any equivariant layer is equivalent to a certain parameter sharing scheme. If we adopt this point of view, our work reveals the structure of the parameter sharing in the case of graphs and hyper-graphs.  In another work, \cite{Kondor2018a} show that a neural network layer is equivariant to the action of some compact group iff it implements a generalized form of the convolution operator.
%
%The works that are most related to our work are \cite{kondor2018covariant} which consider graph and hyper graph classification via permutation invariant operations, and \cite{hartford2018deep} that consider the more restrictive case of rectangular matrix completion, in which two different permutations act on rectangular matrices. The relation of our work to this work was explained in the introduction.
%
\citet{yarotsky2018universal} suggested certain group invariant/equivariant models and proved their universality. To the best of our knowledge these models were not implemented. \vspace{-10pt}
\paragraph{Learning of graphs.}
Learning of graphs is of huge interest in machine learning and we restrict our attention to recent advancements in deep learning on graphs. \citet{Gori2005,Scarselli2009} introduced Graph Neural Networks (GNN): GNNs hold a state (a real valued vector) for each node in the graph, and propagate these states according to the graph structure and learned parametric functions. This idea was further developed in \citet{Li2015} that use gated recurrent units.
%
%Following the recent success of CNNs in other domains such as images \cite{krizhevsky2012imagenet} , Many works tried to generalize the convolution operator to graphs. 
%
Following the success of CNNs, numerous works suggested ways to define convolution operator on graphs. One promising approach is to define convolution by imitating its spectral properties using the Laplacian operator to define generalized Fourier basis on graphs \citep{Bruna2013}. Multiple follow-up works  \citep{Henaff2015,Defferrard2016,Kipf,Levie2017} suggest more efficient and spatially localized filters. The main drawback of spectral approaches is that the generalized Fourier basis is graph-dependent and applying the same network to different graphs can be challenging. 
Another popular way to generalize the convolution operator to graphs is learning stationary functions that operate on neighbors of each node and update its current state \citep{Atwood2015,Duvenaud2015,Hamilton2017,Niepert2016,Velickovic2017,monti2017geometric,Simonovsky2017}. This idea  generalizes the \emph{locality} and \emph{weight sharing} properties of the standard convolution operators on regular grids.  As shown in the \revision{important} work of \cite{Gilmer2017}, most of the the above mentioned methods (including the spectral methods) can be seen as instances of the general class of \emph{Message Passing Neural Networks}.\vspace{-5pt}

% which consist of two steps: a message passing step that updates the state of each node according to learnable parametric functions of its neighboring states, and a readout functions that aggregates the final node states into a single graph feature.  

\section{Linear invariant and equivariant layers}\vspace{-2pt}
\label{s:inv_equi_layers}

In this section we characterize the collection of linear invariant and equivariant layers. We start with the case of a single node set $\sV$ of size $n$ and edge-value data, that is order 2 tensors $\tA=\mA\in\Real^{n\times n}$. As a typical example imagine, as above, an adjacency matrix of a graph.  We set a bit of notation. Given a matrix $\mX\in\Real^{a\times b}$ we denote $\vcc{\mX}\in\Real^{ab\times 1}$ its column stack, and by brackets the inverse action of reshaping to a square matrix, namely $\mat{\vcc{\mX}}=\mX$. Let $p$ denote an arbitrary permutation and $\mP$ its corresponding permutation matrix. 

%We are interested in linear layers that are invariant and equivariant to renumbering of the set $\sV$. 

Let $\mL\in\Real^{\scriptscriptstyle 1 \times n^2}$ denote the matrix representing a general linear operator $L:\Real^{n\times n}\too\Real$ in the standard basis, then $L$ is order invariant iff $\mL\vcc{\mP^T \mA \mP} = \mL \vcc{\mA}$. Using the property of the Kronecker product that $\vcc{\mX\mA\mY}=\mY^T\otimes \mX \vcc{\mA}$, we get the equivalent equality $\mL\mP^T\otimes \mP^T \vcc{\mA}=\mL\vcc{\mA}$. Since the latter equality should hold for every $\mA$ we get (after transposing both sides of the equation) that order invariant $\mL$ is equivalent to the equation \vspace{-5pt}
\begin{equation}\label{e:invariant_order_2}
	\mP\otimes \mP\, \vcc{\mL} = \vcc{\mL}\vspace{-5pt}
\end{equation}
for every permutation matrix $\mP$. Note that we used $\mL^T=\vcc{\mL}$. 

For equivariant layers we consider a general linear operator $L:\Real^{n\times n}\too \Real^{n\times n}$ and its corresponding matrix $\mL\in\Real^{\scriptscriptstyle n^2\times n^2}$. Equivariance of $L$ is now equivalent to $\mat{\mL \vcc{\mP^T \mA \mP}}=\mP^T\mat{\mL \vcc{\mA}}\mP$. Using the above property of the Kronecker product again we get $\mL \mP^T \otimes \mP^T \vcc{\mA}=\mP^T\otimes \mP^T \mL \vcc{\mA}$. Noting that $\mP^T\otimes \mP^T$ is an $n^2\times n^2$ permutation matrix and its inverse is $\mP\otimes \mP$ we get to the equivalent equality $\mP \otimes \mP \mL \mP^T \otimes \mP^T \vcc{\mA}=\mL \vcc{\mA}$.  As before, since this holds for every $\mA$ and using the properties of the Kronecker product we get that $\mL$ is order equivariant iff for all permutation matrices $\mP$ \vspace{-0pt}
\begin{equation}\label{e:equivariant_order_2}
	\mP\otimes \mP \otimes \mP \otimes \mP \,\vcc{\mL} = \vcc{\mL}. \vspace{-0pt}
\end{equation} 
From equations \ref{e:invariant_order_2} and \ref{e:equivariant_order_2} we see that finding invariant and equivariant linear layers for the order-2 tensor data over one node set requires finding fixed points of the permutation matrix group represented by Kronecker powers $\mP\otimes\mP\otimes \cdots \otimes\mP$ of permutation matrices $\mP$. As we show next, this is also the general case for order-$k$ tensor data $\tA\in\Real^{\scriptscriptstyle n^k}$ over one node set, $\sV$. That is, \begin{align}\label{e:invariant_order_k}
\text{invariant } \mL: &  \qquad \mP^{\otimes k} \vcc{\mL} = \vcc{\mL} \quad \\ \label{e:equivariant_order_k}
\text{equivariant } \mL: & \qquad \mP^{\otimes 2k} \vcc{\mL} = \vcc{\mL} 
\end{align}
for every permutation matrix $\mP$, where $\mP^{\otimes k}=\overbrace{\mP\otimes\cdots \otimes \mP}^{k}$.  In \eqref{e:invariant_order_k},  $\mL\in\Real^{\scriptscriptstyle 1 \times n^k}$ is the matrix of an invariant operator; and in \eqref{e:equivariant_order_k}, $\mL\in\Real^{\scriptscriptstyle n^k\times n^k}$ is the matrix of an equivariant operator. We call equations \ref{e:invariant_order_k},\ref{e:equivariant_order_k} the \emph{fixed-point equations}.

To see this, let us add a bit of notation first. Let $p$  denote the permutation corresponding to the permutation matrix $\mP$. We let $\mP\star \tA$ denote the tensor that results from expressing the tensor $\tA$ after renumbering the nodes in $\sV$ according to permutation $\mP$. Explicitly, the $(p(i_1),p(i_2),\ldots,p(i_k))$-th entry of $\mP\star \tA$ equals the $(i_1,i_2,\ldots,i_k)$-th entry of $\tA$. 
The matrix that corresponds to the operator $\mP\star$ in the standard tensor basis $\ve^{(i_1)}\otimes\cdots \otimes\ve^{(i_k)}$ is the Kronecker power $\mP^{T\otimes k}=(\mP^T)^{\otimes k}$. Note that $\vcc{\tA}$ is exactly the coordinate vector of the tensor $\tA$ in this standard basis and therefore we have $\vcc{\mP\star\tA}= \mP^{T\otimes k}\vcc{\tA}$. We now show:

\begin{proposition}\label{prop:inv_equivariant_iff_fixedpoint}
A linear layer is invariant (equivariant) if and only if its coefficient matrix satisfies the fixed-point equations, namely \eqref{e:invariant_order_k} (\eqref{e:equivariant_order_k}). \vspace{-10pt}
\end{proposition}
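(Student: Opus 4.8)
The plan is to show the equivalence for both the invariant and equivariant cases by directly translating the defining symmetry property ($f(\mP\star\tA)=f(\tA)$ or $f(\mP\star\tA)=\mP\star f(\tA)$) into an equation on the coefficient matrix $\mL$, using only the two facts already established in the excerpt: (i) the matrix of the reordering operator $\mP\star$ on order-$k$ tensors is the Kronecker power $\mP^{T\otimes k}=(\mP^T)^{\otimes k}$, so that $\vcc{\mP\star\tA}=\mP^{T\otimes k}\vcc{\tA}$; and (ii) the Kronecker-power matrices are permutation matrices, hence orthogonal, so $(\mP^{T\otimes k})^{-1}=(\mP^{T\otimes k})^T=\mP^{\otimes k}$. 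The order-$2$ derivation already carried out in the text (equations \ref{e:invariant_order_2} and \ref{e:equivariant_order_2}) is exactly the $k=2$, $l\in\{0,2\}$ instance of this argument, so the proof is really just the observation that the same manipulation goes through verbatim for general $k$.

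First, for the invariant case: write $L(\tA)=\mL\vcc{\tA}$ with $\mL\in\Real^{1\times n^k}$. Then $L$ is invariant iff $\mL\vcc{\mP\star\tA}=\mL\vcc{\tA}$ for all $\mP$ and all $\tA$, i.e.\ iff $\mL\,\mP^{T\otimes k}\vcc{\tA}=\mL\vcc{\tA}$ for all $\tA$. Since this holds for every $\tA\in\Real^{n^k}$, it is equivalent to the matrix identity $\mL\,\mP^{T\otimes k}=\mL$; transposing, $\mP^{\otimes k}\mL^T=\mL^T$, and since $\mL$ is a row vector $\mL^T=\vcc{\mL}$, which is precisely \eqref{e:invariant_order_k}. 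Conversely, reading these steps backwards shows the fixed-point equation implies invariance. Second, for the equivariant case: write $L:\Real^{n^k}\too\Real^{n^l}$ with matrix $\mL\in\Real^{n^l\times n^k}$, so $\vcc{L(\tA)}=\mL\vcc{\tA}$. Equivariance $f(\mP\star\tA)=\mP\star f(\tA)$ becomes $\mL\,\mP^{T\otimes k}\vcc{\tA}=\mP^{T\otimes l}\mL\vcc{\tA}$ for all $\tA$, hence $\mL\,\mP^{T\otimes k}=\mP^{T\otimes l}\mL$. Using $(\mP^{T\otimes l})^{-1}=\mP^{\otimes l}$ (fact (ii)), multiply on the left by $\mP^{\otimes l}$ to get $\mP^{\otimes l}\,\mL\,\mP^{T\otimes k}=\mL$. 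Now apply the Kronecker-product/vec identity $\vcc{\mX\mL\mY}=(\mY^T\otimes\mX)\vcc{\mL}$ with $\mX=\mP^{\otimes l}$ and $\mY=\mP^{T\otimes k}$, noting $(\mP^{T\otimes k})^T=\mP^{\otimes k}$, to obtain $(\mP^{\otimes k}\otimes\mP^{\otimes l})\vcc{\mL}=\mP^{\otimes(k+l)}\vcc{\mL}=\vcc{\mL}$, which is \eqref{e:equivariant_order_k} when $l=k$ (and the obvious generalization $\mP^{\otimes(k+l)}\vcc{\mL}=\vcc{\mL}$ otherwise). Again each step is reversible, giving the converse.

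I do not expect any genuine obstacle here — the statement is essentially a bookkeeping lemma. The only points that require a line of care are: justifying the passage from "holds for all $\tA$" to the matrix identity (immediate, since the $\vcc{\tA}$ range over all of $\Real^{n^k}$), and getting the bracketing of the Kronecker factors right in the equivariant case so that $\mP^{\otimes k}\otimes\mP^{\otimes l}$ is correctly identified with $\mP^{\otimes(k+l)}$ acting on $\vcc{\mL}$ where $\mL$ is reshaped as an $n^l\times n^k$ matrix. A remark worth including for completeness is that the same reasoning, combined with the observation that both sides of each identity are multilinear and the tensor case reduces to the matrix case already done in the text, means no new ideas beyond \eqref{e:invariant_order_2}–\eqref{e:equivariant_order_2} are needed; one could even phrase the proof as "the computation preceding \eqref{e:invariant_order_k} is valid for arbitrary $k$" and be done.
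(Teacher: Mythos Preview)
Your proposal is correct and follows essentially the same route as the paper's proof: translate invariance/equivariance into $\mL\,\mP^{T\otimes k}=\mL$ (resp.\ $\mP^{\otimes k}\mL\,\mP^{T\otimes k}=\mL$) by using $\vcc{\mP\star\tA}=\mP^{T\otimes k}\vcc{\tA}$ and the fact that the identity must hold for all $\tA$, then vectorize via the Kronecker/$\vcc{}$ identity to obtain the fixed-point equations. The only difference is cosmetic: you treat the slightly more general equivariant case $L:\Real^{n^k}\to\Real^{n^l}$ (yielding $\mP^{\otimes(k+l)}\vcc{\mL}=\vcc{\mL}$), which the paper handles separately later under ``Mixed order equivariant layers,'' but for the proposition as stated ($l=k$) the arguments coincide.
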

\begin{proof}
Similarly to the argument from the order-$2$ case, let $\mL\in\Real^{\scriptscriptstyle 1\times n^k }$  denote the matrix corresponding to a general linear operator $L:\Real^{\scriptscriptstyle n^k}\too \Real$. Order invariance means 
\begin{equation}\label{e:invariance}
	\mL \vcc{\mP\star \tA}=\mL\vcc{\tA}.
\end{equation}
Using the matrix $\mP^{T\otimes k}$ we have equivalently $\mL \mP^{T\otimes k} \vcc{\tA}= \mL\vcc{\tA}$ which is in turn equivalent to $\mP^{\otimes k}\vcc{\mL}=\vcc{\mL}$ for all permutation matrices $\mP$. For order equivariance, let $\mL\in \Real^{\scriptscriptstyle n^k\times n^k}$ denote the matrix of a general linear operator $L:\Real^{\scriptscriptstyle n^k}\too\Real^{\scriptscriptstyle n^k}$. Now equivariance of $L$ is equivalent to 
\begin{equation}\label{e:equivariance}
	\mat{\mL \vcc{\mP\star \tA}}=\mP\star \mat{\mL \vcc{\tA}}.
\end{equation}
Similarly to above this is equivalent to $\mL\mP^{T\otimes k}\vcc{\tA}=\mP^{T\otimes k}\mL \vcc{\tA}$ which in turn leads to $\mP^{\otimes k}\mL \mP^{T\otimes k} = \mL$, and using the Kronecker product properties we get $\mP^{\otimes 2k}\vcc{\mL}=\vcc{\mL}$. 
\end{proof}

\subsection{Solving the fixed-point equations}
\label{ss:solving_fixedpoint_equations}

We have reduced the problem of finding all invariant and equivariant linear operators $L$ to finding all solutions $\mL$ of equations \ref{e:invariant_order_k} and \ref{e:equivariant_order_k}. Although the fixed point equations consist of an exponential number of equations with only a polynomial number of unknowns they actually possess  a solution space of constant dimension (\ie, independent of $n$). 

To find the solution of $\mP^{\otimes \ell}\vcc{\tX} = \vcc{\tX}$, where $\tX\in\Real^{\scriptscriptstyle n^\ell}$, note that $\mP^{\otimes \ell}\vcc{\tX}=\vcc{\mQ \star \tX}$, where $\mQ=\mP^T$. As above, the tensor $\mQ\star \tX$ is the tensor resulted from renumbering the nodes in $\sV$ using permutation $\mQ$. Equivalently, the fixed-point equations we need to solve can be formulated as 
\begin{equation}\label{e:fixedpoint_simple}
	\mQ \star \tX= \tX, \quad \forall\mQ\text{  permutation matrices}
\end{equation}
The permutation group is acting on tensors $\tX\in\Real^{\scriptscriptstyle n^\ell}$ with the action $\tX\mapsto \mQ\star\tX$. We are looking for fixed points under this action. To that end, let us define an equivalence relation in the index space of tensors $\Real^{\scriptscriptstyle n^\ell}$, namely in $[n]^\ell$, where with a slight abuse of notation (we use light brackets) we set $[n]=\set{1,2,\ldots,n}$. For multi-indices $\va,\vb\in [n]^\ell$ we set $\va \sim \vb$ iff $\va,\vb$ have the same \emph{equality pattern}, that is $\va_i=\va_j \Leftrightarrow \vb_i=\vb_j$ for all $i,j\in [\ell  ]$. 

The equality pattern equivalence relation partitions the index set $[n]^\ell$ into equivalence classes, the collection of which is denoted $[n]^\ell /_\sim$. Each equivalence class can be represented by a unique partition of the set $[\ell]$ where each set in the partition indicates maximal set of identical values. Let us exemplify. For $\ell=2$ we have two equivalence classes $\gamma_1=\set{\set{1},\set{2}}$ and $\gamma_2=\set{\set{1,2}}$; $\gamma_1$ represents all multi-indices $(i,j)$ where $i\ne j$, while $\gamma_2$ represents all multi-indices $(i,j)$ where $i=j$. For $\ell=4$, there are $15$ equivalence classes $\gamma_1=\set{\set{1},\set{2},\set{3},\set{4}}$,  $\gamma_2=\set{\set{1},\set{2},\set{3,4}}$, 
	$\gamma_3=\set{\set{1,2},\set{3},\set{4}}$, \ldots ; $\gamma_3$ represents multi-indices $(i_1,i_2,i_3,i_4)$ so that $i_1=i_2$, $i_2\ne i_3$, $i_3\ne i_4$, $i_2\ne i_4$.
  
For each equivalence class $\gamma\in [n]^\ell /_\sim$ we define an order-$\ell$ tensor $\tB^{\gamma}\in\Real^{\scriptscriptstyle n^\ell}$ by setting 
%\begin{equation}\label{e:basis}
%	\tB^{c}_{i_1,i_2,\ldots i_\ell}=\begin{cases}
%		1 & (i_1,i_2,\ldots,i_\ell)\in c\\
%		0 & \text{otherwise}
%	\end{cases}
%\end{equation}
\begin{equation}\label{e:basis}
	\tB^{\gamma}_{\va}=\begin{cases}
		1 & \va \in \gamma\\
		0 & \text{otherwise} 
	\end{cases} \vspace{-5pt}
\end{equation}

Since we have a tensor $\tB^{\gamma}$ for every equivalence class $\gamma$, and the equivalence classes are in one-to-one correspondence with partitions of the set $[\ell]$ we have $\bell(\ell)$ tensors $\tB^{\gamma}$. (Remember that $\bell(\ell)$ denotes the $\ell$-th Bell number.)
We next prove:
\begin{proposition}\label{prop:sol_of_fixedpoint}
	The tensors $\tB^{\gamma}$ in \eqref{e:basis} form an orthogonal basis (in the standard inner-product) to the solution set of equations \ref{e:fixedpoint_simple}. The dimension of the solution set is therefore $\bell(\ell)$.\vspace{-5pt}
\end{proposition}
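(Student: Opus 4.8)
The plan is to show two things: first, that each $\tB^\gamma$ solves the fixed-point equation \eqref{e:fixedpoint_simple}; and second, that these tensors span the entire solution space, with orthogonality being essentially immediate. For the first part, I would fix a permutation matrix $\mQ$ with underlying permutation $q$, and recall that $(\mQ\star\tX)_{\va} = \tX_{q^{-1}(\va)}$ where $q$ acts coordinatewise on the multi-index $\va = (a_1,\dots,a_\ell)$, i.e. $q^{-1}(\va) = (q^{-1}(a_1),\dots,q^{-1}(a_\ell))$. The key observation is that applying a permutation coordinatewise to a multi-index does not change its equality pattern: $a_i = a_j \Leftrightarrow q^{-1}(a_i) = q^{-1}(a_j)$ since $q$ is a bijection. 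Hence $\va \in \gamma \Leftrightarrow q^{-1}(\va) \in \gamma$, so $(\mQ\star\tB^\gamma)_{\va} = \tB^\gamma_{q^{-1}(\va)} = \tB^\gamma_{\va}$, giving $\mQ\star\tB^\gamma = \tB^\gamma$. Orthogonality is clear since distinct equivalence classes $\gamma \neq \gamma'$ are disjoint subsets of $[n]^\ell$, so $\ip{\tB^\gamma,\tB^{\gamma'}} = 0$, and each $\tB^\gamma$ is nonzero (its class is nonempty as long as $n \geq \ell$, or more carefully $n$ is at least the number of blocks of the corresponding partition; one should note the statement implicitly assumes $n$ large enough, or handle small $n$ separately). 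Linear independence follows from orthogonality of nonzero vectors.

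For the spanning part — which I expect to be the main point of the argument — suppose $\tX$ satisfies $\mQ\star\tX = \tX$ for all permutation matrices $\mQ$. I would argue that $\tX$ must be constant on each equivalence class $\gamma$, from which $\tX = \sum_\gamma c_\gamma \tB^\gamma$ follows immediately with $c_\gamma$ the common value. To see the constancy: take any two multi-indices $\va, \vb$ in the same class $\gamma$. Since they have the same equality pattern, there is a bijection of $[n]$ — in fact a permutation $q$ — with $q(\va) = \vb$ (map the distinct values of $\va$ to the corresponding distinct values of $\vb$ in order, and extend arbitrarily to a permutation of $[n]$; this is possible precisely because $\va$ and $\vb$ have the same number of distinct entries, which is guaranteed by the equality pattern condition). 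Then with $\mQ$ the permutation matrix of $q$, the fixed-point equation gives $\tX_\vb = (\mQ\star\tX)_\vb = \tX_{q^{-1}(\vb)} = \tX_\va$. Hence $\tX$ is constant on $\gamma$, as claimed.

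Combining: the $\tB^\gamma$ are mutually orthogonal, nonzero, each lies in the solution space, and they span it, so they form an orthogonal basis. Since the number of equivalence classes $[n]^\ell/_\sim$ is in bijection with the set of partitions of $[\ell]$, their count is $\bell(\ell)$, giving the dimension claim. The only genuinely delicate point is the construction of the permutation $q$ sending $\va$ to $\vb$ and the verification that its coordinatewise action is exactly the reindexing $\mQ\star$; this is where I would be most careful to get the direction of the permutation (and the distinction between $q$ and $q^{-1}$, $\mP$ and $\mP^T$) right, consistently with the convention $\vcc{\mP\star\tA} = \mP^{T\otimes k}\vcc{\tA}$ established just before the proposition. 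Everything else is routine bookkeeping.
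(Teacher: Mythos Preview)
Your proposal is correct and follows essentially the same approach as the paper: both arguments hinge on the observation that a coordinatewise permutation preserves the equality pattern of a multi-index, so the solution set of \eqref{e:fixedpoint_simple} is exactly the set of tensors constant on the equivalence classes $\gamma$, and the indicators $\tB^\gamma$ form an orthogonal basis by disjointness of supports. Your version is arguably a bit more careful than the paper's---you flag the direction-of-permutation issue and the implicit assumption that $n$ is at least the number of blocks in each partition so that every class $\gamma$ is nonempty---but structurally the two proofs are the same.
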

\begin{proof}
Let us first show that: $\tX$ is a solution to \eqref{e:fixedpoint_simple} iff $\tX$ is constant on equivalence classes of the equality pattern relation, $\sim$.
 Since permutation $q:[n]\too[n]$ is a bijection the equality patterns of $ \va=(i_1,i_2,\ldots,i_\ell)\in[n]^\ell$ and $q(\va)=(q(i_1),q(i_2),\ldots,q(i_\ell))\in [n]^\ell$ are identical, \ie, $\va\sim q(\va)$. Taking the $\va\in[n]^\ell$ entry of both sides of \eqref{e:fixedpoint_simple} gives $\tX_{q(\va)}=\tX_{\va}$. Now, if $\tX$ is constant on equivalence classes then in particular it will have the same value at $\va$ and $q(\va)$ for all $\va\in[n]^\ell$ and permutations $q$. Therefore $\tX$ is a solution to \eqref{e:fixedpoint_simple}. For the only if part, consider a tensor $\tX$ for which there exist multi-indices $\va\sim\vb$ (with identical equality patterns) and $\tX_\va \ne \tX_\vb$ then $\tX$ is not a solution to \eqref{e:fixedpoint_simple}. Indeed, since $\va\sim \vb$ one can find a permutation $q$ so that $\vb=q(\va)$ and using the equation above, $\tX_{\vb}=\tX_{q(\va)}=\tX_\va$ which leads to a contradiction.\\
To finish the proof note that any tensor $\tX$, constant on equivalence classes, can be written as a linear combination of $\tB^{\gamma}$, which are merely indicators of the equivalence class. Furthermore, the collection $\tB^{\gamma}$ have pairwise disjoint supports and therefore are an orthogonal basis.
\end{proof}

Combining propositions \ref{prop:inv_equivariant_iff_fixedpoint} and \ref{prop:sol_of_fixedpoint} we get the characterization of invariant and equivariant linear layers acting on general $k$-order tensor data over a single node set $\sV$: 
\begin{theorem}\label{thm:char_of_inv_equi_one_node_set}
The space of invariant (equivariant) linear layers $\Real^{\scriptscriptstyle n^k}\too \Real $ ($\Real^{\scriptscriptstyle n^k}\too \Real^{\scriptscriptstyle n^k} $) is of dimension $\bell(k)$ ($\bell(2k)$) with basis elements $\tB^{\gamma}$ defined in \eqref{e:basis}, where $\gamma$ are equivalence classes in $[n]^k/_\sim$ ($[n]^{2k}/_\sim$).
	% of the index set $[n]^k$ ($[n]^{2k}$) with respect to the equality pattern relation $\sim$, defined above. 
\end{theorem}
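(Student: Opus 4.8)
The plan is to reduce Theorem~\ref{thm:char_of_inv_equi_one_node_set} directly to the two propositions that precede it, using nothing beyond bookkeeping about which tensor space the operator matrix $\mL$ lives in. First I would recall that by Proposition~\ref{prop:inv_equivariant_iff_fixedpoint} an invariant linear layer $L:\Real^{\scriptscriptstyle n^k}\too\Real$ is the same thing as a matrix $\mL\in\Real^{\scriptscriptstyle 1\times n^k}$ whose column-stack $\vcc{\mL}$ satisfies the fixed-point equation $\mP^{\otimes k}\vcc{\mL}=\vcc{\mL}$ for every permutation matrix $\mP$; and an equivariant layer $L:\Real^{\scriptscriptstyle n^k}\too\Real^{\scriptscriptstyle n^k}$ is a matrix $\mL\in\Real^{\scriptscriptstyle n^k\times n^k}$ whose column-stack $\vcc{\mL}\in\Real^{\scriptscriptstyle n^{2k}}$ satisfies $\mP^{\otimes 2k}\vcc{\mL}=\vcc{\mL}$.

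The second step is to observe that $\vcc{\mL}$, viewed as a vector in $\Real^{\scriptscriptstyle n^k}$ (invariant case) or $\Real^{\scriptscriptstyle n^{2k}}$ (equivariant case), is exactly the coordinate vector of an order-$k$ (resp.\ order-$2k$) tensor $\tX$, and that the fixed-point equation for $\vcc{\mL}$ is precisely equation~\eqref{e:fixedpoint_simple} with $\ell=k$ (resp.\ $\ell=2k$), since $\mP^{\otimes\ell}\vcc{\tX}=\vcc{\mQ\star\tX}$ with $\mQ=\mP^T$ ranges over all permutation matrices as $\mP$ does. Hence the solution space of invariant (equivariant) layers is, under the identification $\mL\leftrightarrow\tX$, literally the solution set of~\eqref{e:fixedpoint_simple} for $\ell=k$ (resp.\ $\ell=2k$).

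The third step is to invoke Proposition~\ref{prop:sol_of_fixedpoint}: that solution set has the orthogonal basis $\set{\tB^{\gamma}}$ indexed by equivalence classes $\gamma\in[n]^\ell/_\sim$, and its dimension is $\bell(\ell)$. Specializing $\ell=k$ gives dimension $\bell(k)$ with basis $\tB^{\gamma}$, $\gamma\in[n]^k/_\sim$, for the invariant layers, and $\ell=2k$ gives dimension $\bell(2k)$ with basis $\tB^{\gamma}$, $\gamma\in[n]^{2k}/_\sim$, for the equivariant layers. Finally I would note that orthogonality is preserved under the trivial relabeling $\vcc{\mL}\leftrightarrow\tX$ (it is just a reindexing of coordinates and, in the equivariant case, a reshaping of a vector into a matrix), so the $\tB^{\gamma}$ remain an orthogonal basis for the layer spaces themselves in the standard inner product.

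There is essentially no obstacle here — the theorem is a corollary, and the only thing to be careful about is the indexing/reshaping bookkeeping, namely keeping straight that the relevant tensor order for invariant layers is $k$ while for equivariant layers it is $2k$ (because the operator matrix is itself an $n^k\times n^k$ array, whose vectorization is an order-$2k$ object). If one wanted to be thorough one could also remark that the $\tB^{\gamma}$, when reshaped back to matrices $\mat{\,\cdot\,}$ in the equivariant case, give the explicit basis layers depicted in Figure~\ref{fig:basis_k_2}, but that is a presentational point rather than part of the proof.
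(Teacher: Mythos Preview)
Your proposal is correct and matches the paper's approach exactly: the paper presents Theorem~\ref{thm:char_of_inv_equi_one_node_set} as an immediate corollary obtained by ``combining propositions~\ref{prop:inv_equivariant_iff_fixedpoint} and~\ref{prop:sol_of_fixedpoint}'', with no additional argument. Your write-up is in fact more explicit than the paper's, spelling out the $\ell=k$ versus $\ell=2k$ bookkeeping that the paper leaves implicit.
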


\paragraph{Biases}
%\label{ss:biases}

Theorem \ref{thm:char_of_inv_equi_one_node_set} deals with purely linear layers, that is without \emph{bias}, \ie, without constant part. Nevertheless extending the previous analysis to constant layers is straight-forward. First, any constant layer $\Real^{\scriptscriptstyle n^k}\too \Real$ is also invariant so all constant invariant layers are represented by constants $c\in\Real$. For equivariant layers $L:\Real^{\scriptscriptstyle n^k}\too \Real^{\scriptscriptstyle n^k}$ we note that equivariance means $\tC=L(\mP \star \tA) = \mP \star L(\tA) = \mP\star \tC$. Representing this equation in matrix form we get $\mP^{T\otimes k}\vcc{\tC}=\vcc{\tC}$. This shows that constant equivariant layers on one node set acting on general $k$-order tensors are also characterized by the fixed-point equations, and in fact have the same form and dimensionality as invariant layers on $k$-order tensors, see \eqref{e:invariant_order_k}. Specifically, their basis is $\tB^{\lambda}$, $\lambda\in [n]^k/_\sim$. For example, for $k=2$, the biases are shown on the right in figure \ref{fig:basis_k_2}. 

\paragraph{Features.}
%\label{ss:features}

It is pretty common that input tensors have vector values (\ie, features) attached to each hyper-edge ($k$-tuple of nodes) in $\sV$, that is $\tA\in\Real^{\scriptscriptstyle n^k \times d}$. Now linear invariant $\Real^{\scriptscriptstyle n^k \times d}\too\Real^{\scriptscriptstyle 1\times d'}$ or equivariant $\Real^{\scriptscriptstyle n^k \times d}\too\Real^{\scriptscriptstyle n^k \times d'}$ layers can be formulated using a slight generalization of the previous analysis. The operator $\mP \star \tA$ is defined to act only on the nodal indices, \ie, $i_1,\ldots,i_k$ (the first $k$ indices). Explicitly, the $(p(i_1),p(i_2),\ldots,p(i_k),i_{k+1})$-th entry of $\mP\star \tA$ equals the $(i_1,i_2,\ldots,i_k,i_{k+1})$-th entry of $\tA$. 

Invariance is now formulated exactly as before, \eqref{e:invariance}, namely $\mL \vcc{\mP \star \tA} = \mL \vcc{\tA}$. The matrix that corresponds to $\mP\star$ acting on $\Real^{\scriptscriptstyle n^k \times d}$ in the standard basis is $\mP^{T\otimes k}\otimes \mI_d$ and therefore $\mL (\mP^{T\otimes k}\otimes \mI_d) \vcc{\tA}=\mL \vcc{\tA}$. Since this is true for all $\tA$ we have $(\mP^{\otimes k}\otimes \mI_d \otimes\mI_{d'})\, \vcc{\mL}=\vcc{\mL}$, using the properties of the Kronecker product.  
Equivariance is written as in \eqref{e:equivariance}, $\mat{\mL \vcc{\mP \star \tA}}=\mP\star \mat{\mL \vcc{\tA}}$. In matrix form, the equivariance equation becomes $\mL (\mP^{T\otimes k}\otimes \mI_d) \vcc{\tA} = (\mP^{T\otimes k}\otimes \mI_{d'}) \mL \vcc{\tA}$, since this is true for all $\tA$ and using the properties of the Kronecker product again we get to $\mP^{\otimes k}\otimes \mI_d \otimes \mP^{\otimes k} \otimes \mI_{d'} \ \vcc{\mL} = \vcc{\mL}$.
The basis (with biases) to the solution space of these fixed-point equations is defined as follows. We use $\va,\vb\in [n]^k$, $\ i,j\in [d]$,   $\ i',j'\in[d']$, $\lambda\in [n]^k/_\sim$, $\mu \in [n]^{2k}/_\sim$. \begin{subequations}\label{e:basis_general}
	\begin{align}
		\text{invariant:} \qquad & \tB^{\lambda,j,j'}_{\va, i, i'}=\begin{cases}\scriptstyle
		1 & \scriptstyle \va \in \lambda,\ \ i=j,\ \ i'=j'\\ \scriptstyle
		0 & \scriptstyle\text{otherwise}
	\end{cases}\ \ \ \ \ \ \ \, ;  \quad \tC^{j'}_{ i'}=\begin{cases}\scriptstyle
1 & \scriptstyle i'=j' \\ \scriptstyle 0 & \scriptstyle\text{otherwise}	
\end{cases}
  \\ \label{e:basis_general_equivariant}
		\text{equivariant:} \qquad & \tB^{\mu,j,j'}_{\va, i, \vb, i'}=\begin{cases}\scriptstyle
		1 & \scriptstyle(\va,\vb)\in \mu,\ \ i=j,\ \ i'=j'\\ \scriptstyle
		0 & \scriptstyle\text{otherwise}
	\end{cases} ;  \quad \tC^{\lambda,j'}_{\vb, i'}=\begin{cases}\scriptstyle
		1 & \scriptstyle \vb\in \lambda,\ \ i'=j'\\ \scriptstyle
		0 & \scriptstyle\text{otherwise}
	\end{cases}
	\end{align}
\end{subequations}
%\begin{subequations}
%	\begin{align}
%		 & \tB^{c,j,j'}_{i_1,\ldots i_k, i, i'}=\begin{cases}\scriptstyle
%		1 & \scriptstyle(i_1,\ldots,i_k)\in c,\ \ i=j,\ \ i'=j'\\ \scriptstyle
%		0 & \scriptstyle\text{otherwise}
%	\end{cases}\ \ \, ;  \quad \tC^{j'}_{i_1,\ldots i_k, i, i'}=\ve^{(j')}
%  \\
%		 & \tB^{c,j,j'}_{i_1,\ldots i_{2k}, i, i'}=\begin{cases}\scriptstyle
%		1 & \scriptstyle(i_1,\ldots,i_{2k})\in c,\ \ i=j,\ \ i'=j'\\ \scriptstyle
%		0 & \scriptstyle\text{otherwise}
%	\end{cases} ;  \quad \tC^{c,j'}_{i_1,\ldots i_{2k},i, i'}=\begin{cases}\scriptstyle
%		1 & \scriptstyle(i_1,\ldots,i_{k})\in c,\ \ i'=j'\\ \scriptstyle
%		0 & \scriptstyle\text{otherwise}
%	\end{cases}
%	\end{align}
%\end{subequations}
Note that these basis elements are similar to the ones in \eqref{e:basis} with the difference that we have different basis tensor for each pair of input $j$ and output $j'$ feature channels.

An invariant (\eqref{e:layer_inv1})/ equivariant (\eqref{e:layer_equi1}) linear layer $L$ including the biases can be written as follows for input $\tA\in\Real^{\scriptscriptstyle n^k\times d}$:\vspace{-2pt}
%\begin{subequations}\label{e:layer}
%\begin{align} \label{e:layer_inv1}
%\text{invariant:} \qquad \qquad & L(\tA)_{i'} = \sum_{\va,i} \tT_{\va,i,i'}\tA_{\va,i} + \tY_{i'}, \\ \label{e:layer_inv2} &\tT=	\sum_{\lambda,j,j'} w_{\lambda,j,j'} \tB^{\lambda,j,j'} ;\quad \tY= \sum_{j'}b_{j'}\tC^{j'}	\qquad  \\  \label{e:layer_equi1}
%\text{equivariant:} \qquad \qquad  & L(\tA)_{\vb,i'} = \sum_{\va,i} \tT_{\va,i,\vb,i'}\tA_{\va,i} + \tY_{\vb,i'}, \\ \label{e:layer_equi2} &\tT=	\sum_{\mu,j,j'} w_{\mu,j,j'} \tB^{\mu,j,j'} ;\quad \tY= \sum_{\lambda,j'}b_{\lambda, j'}\tC^{\lambda,j'}	\qquad
%\end{align}	
%\end{subequations}
\begin{subequations}\label{e:layer}
\begin{align} \label{e:layer_inv1}
& L(\tA)_{i'} = \sum_{\va,i} \tT_{\va,i,i'}\tA_{\va,i} + \tY_{i'}; \quad \tT=	\sum_{\lambda,j,j'} w_{\lambda,j,j'} \tB^{\lambda,j,j'} ; \tY= \sum_{j'}b_{j'}\tC^{j'}	\qquad  \\  \label{e:layer_equi1}
& L(\tA)_{\vb,i'} = \sum_{\va,i} \tT_{\va,i,\vb,i'}\tA_{\va,i} + \tY_{\vb,i'}; \quad \tT=	\sum_{\mu,j,j'} w_{\mu,j,j'} \tB^{\mu,j,j'} ; \tY= \sum_{\lambda,j'}b_{\lambda, j'}\tC^{\lambda,j'}	\vspace{-7pt}
\end{align}	
\end{subequations}
where the learnable parameters are $w\in \Real^{\scriptscriptstyle \bell(k)\times d \times d'}$ and $b\in \Real^{\scriptscriptstyle d'}$ for a single linear \emph{invariant} layer $\Real^{\scriptscriptstyle n^k \times d}\too \Real^{\scriptscriptstyle d'}$; and it is $w\in \Real^{\scriptscriptstyle \bell(2k)\times d \times d'}$ and $b\in \Real^{\bell(k)\times\scriptscriptstyle d'}$ for a single linear \emph{equivariant} layer $\Real^{\scriptscriptstyle n^k \times d}\too \Real^{\scriptscriptstyle n^k \times  d'}$. 
The natural generalization of theorem \ref{thm:char_of_inv_equi_one_node_set} to include bias and features is therefore:
\begin{theorem}\label{thm:char_of_inv_equi_one_node_set_with_bias_and_features}
The space of invariant (equivariant) linear layers $\Real^{\scriptscriptstyle n^k,d}\too \Real^{\scriptscriptstyle d'} $ ($\Real^{\scriptscriptstyle n^k \times d}\too \Real^{\scriptscriptstyle n^k \times d'}$) is of dimension $dd'\bell(k)+d'$ (for equivariant: $dd'\bell(2k)+d'\bell(k)$) with basis elements defined in \eqref{e:basis_general}; equation \ref{e:layer_inv1} (\ref{e:layer_equi1}) show the general form of such layers.
%
%	A linear layer $\Real^{\scriptscriptstyle n^k,d}\too \Real^{\scriptscriptstyle d'} $ ($\Real^{\scriptscriptstyle n^k \times d}\too \Real^{\scriptscriptstyle n^k \times d'} $) is invariant (equivariant) iff it is of the form of equations \ref{e:layer_inv1}-\ref{e:layer_inv2} (\ref{e:layer_equi1}-\ref{e:layer_equi2}). 
\end{theorem}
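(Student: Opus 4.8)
The plan is to reduce the statement to the already-established characterization of fixed points of Kronecker powers (Propositions~\ref{prop:inv_equivariant_iff_fixedpoint} and~\ref{prop:sol_of_fixedpoint}), tracking carefully how the feature dimensions $d,d'$ enter. First I would repeat the derivation sketched just above the theorem statement: for an invariant layer $L:\Real^{\scriptscriptstyle n^k\times d}\too\Real^{\scriptscriptstyle d'}$ with matrix $\mL$, invariance $\mL\vcc{\mP\star\tA}=\mL\vcc{\tA}$ for all $\tA$ is equivalent, via the Kronecker identity and the fact that the matrix of $\mP\star$ on $\Real^{\scriptscriptstyle n^k\times d}$ is $\mP^{T\otimes k}\otimes\mI_d$, to the fixed-point equation $(\mP^{\otimes k}\otimes\mI_d\otimes\mI_{d'})\vcc{\mL}=\vcc{\mL}$; likewise the equivariant case reduces to $(\mP^{\otimes k}\otimes\mI_d\otimes\mP^{\otimes k}\otimes\mI_{d'})\vcc{\mL}=\vcc{\mL}$. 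This part is essentially bookkeeping with the Kronecker product and mirrors the proof of Proposition~\ref{prop:inv_equivariant_iff_fixedpoint}.

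Next I would solve these modified fixed-point equations. The key observation is that the identity blocks $\mI_d$ and $\mI_{d'}$ force $\mL$, viewed as a tensor indexed by $(\va,i,i')$ (invariant) or $(\va,i,\vb,i')$ (equivariant), to be supported on the diagonal in the feature indices — intuitively, a permutation acting trivially on the feature slots still forces those slots to be ``constant on their equality pattern,'' but since $\mI_d$ carries only the trivial action on a genuine feature axis (not permuted), one instead observes that the equation decouples as a tensor product: the solution space is $\{\,$solutions in the nodal indices$\,\}\otimes\{\,$fixed points of $\mI_d\,\}\otimes\{\,$fixed points of $\mI_{d'}\,\}$. Wait — more precisely, I would argue that $(\mP^{\otimes k}\otimes\mI_d\otimes\mI_{d'})$ ranges over a group whose fixed-point set is exactly $W\otimes\Real^{d}\otimes\Real^{d'}$ where $W$ is the fixed-point set of $\mP^{\otimes k}$, because the $\mI$ factors impose no constraint. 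By Proposition~\ref{prop:sol_of_fixedpoint}, $\dim W=\bell(k)$ (resp.\ $\bell(2k)$ in the equivariant case with $2k$ nodal slots), so the purely linear part has dimension $dd'\bell(k)$ (resp.\ $dd'\bell(2k)$), with the stated basis $\tB^{\lambda,j,j'}$ (resp.\ $\tB^{\mu,j,j'}$) obtained as $\tB^\lambda\otimes\ve^{(j)}\otimes\ve^{(j')}$; orthogonality and disjoint supports follow as in Proposition~\ref{prop:sol_of_fixedpoint}.

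Then I would handle the bias. For an invariant layer a bias is any constant in $\Real^{d'}$, contributing $d'$ dimensions with basis $\tC^{j'}$; for an equivariant layer the bias $\tC\in\Real^{\scriptscriptstyle n^k\times d'}$ must satisfy $\mP\star\tC=\tC$, i.e.\ $(\mP^{T\otimes k}\otimes\mI_{d'})\vcc{\tC}=\vcc{\tC}$, which by the same argument has solution space of dimension $\bell(k)\,d'$ with basis $\tC^{\lambda,j'}=\tB^\lambda\otimes\ve^{(j')}$. Adding the linear and bias dimensions gives $dd'\bell(k)+d'$ and $dd'\bell(2k)+d'\bell(k)$ respectively; finally I would note that \eqref{e:layer_inv1} and \eqref{e:layer_equi1} are simply the expansion of a general element of these solution spaces in the exhibited bases, which establishes the ``general form'' claim. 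The one step demanding a little care — the main obstacle — is justifying cleanly that tensoring with the identity factors $\mI_d,\mI_{d'}$ contributes full-dimensional (unconstrained) factors to the fixed-point set; this is where one must be explicit that the feature axes are genuinely not acted upon, so that the fixed-point set of the full Kronecker product factorizes as a tensor product of the nodal fixed-point set with the whole feature spaces, rather than with some proper subspace.
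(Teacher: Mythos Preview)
Your proposal is correct, and it is in fact one of the two routes the paper explicitly mentions just after the theorem statement (``repeating the arguments of the proof of Proposition~\ref{prop:sol_of_fixedpoint} slightly adapted to this case''): you observe that the identity factors $\mI_d,\mI_{d'}$ in the fixed-point operator impose no constraint on the feature indices, so the solution space factors as $W\otimes\Real^d\otimes\Real^{d'}$ with $W$ the nodal fixed-point space of Proposition~\ref{prop:sol_of_fixedpoint}, and the basis tensors $\tB^{\lambda,j,j'}=\tB^\lambda\otimes\ve^{(j)}\otimes\ve^{(j')}$ drop out immediately.

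The paper's formal proof in Appendix~\ref{a:haggai_identity}, however, takes the other route. It first establishes the identity $\frac{1}{n!}\sum_{\mP\in\Pi_n}\trace(\mP)^k=\bell(k)$ (Proposition~\ref{prop:haggai_equality}) via the representation-theoretic projection formula $\dim V^G=\frac{1}{|G|}\sum_{g\in G}\trace(g)$, and then computes the dimension of the featured fixed-point space directly as $\frac{1}{n!}\sum_\mP\trace(\mP)^k\trace(\mI_d)\trace(\mI_{d'})=dd'\,\bell(k)$ using multiplicativity of trace under Kronecker product. Your argument is more elementary, self-contained, and makes the basis explicit from the outset; the paper's trace-formula argument is terser for the dimension count once the projection formula is in hand, and it extends uniformly to the multi-node-set setting of Theorem~\ref{thm:multi_node} by the same trace-factorization trick. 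One small note: your initial sentence about the identity blocks forcing $\mL$ to be ``supported on the diagonal in the feature indices'' is not right (the identity factors impose \emph{no} constraint, not a diagonal one), but you correctly discard it in the next sentence.
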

Since, by similar arguments to proposition \ref{prop:sol_of_fixedpoint}, the purely linear parts $\tB$ and biases $\tC$ in \eqref{e:basis_general} are independent solutions to the relevant fixed-point equations, theorem \ref{thm:char_of_inv_equi_one_node_set_with_bias_and_features} will be proved if their number equals the dimension of the solution space of these fixed-point equations, namely $dd'\bell(k)$ for purely linear part and $d'$ for bias in the invariant case, and $dd'\bell(2k)$ for purely linear and $d'\bell(k)$ for bias in the equivariant case. This can be shown by repeating the arguments of the proof of proposition \ref{prop:sol_of_fixedpoint} slightly adapted to this case, or by a combinatorial identity we show in Appendix \ref{a:haggai_identity} .

	For example, figure \ref{fig:basis_k_2} depicts the $15$ basis elements for linear equivariant layers $\Real^{n\times n}\too \Real^{n\times n}$ taking as input edge-value (order-$2$) tensor data $\tA\in\Real^{n\times n}$ and outputting the same dimension tensor. The basis for the purely linear part are shown as $n^2\times n^2$ matrices while the bias part as $n\times n$ matrices (far right); the size of the node set is $|\sV|=n=5$. \vspace{-5pt}

\paragraph{Mixed order equivariant layers.}%\vspace{-5pt}
Another useful generalization of order equivariant linear layers is to linear layers between \emph{different order} tensor layers, that is, $L:\Real^{\scriptscriptstyle n^k} \too \Real^{\scriptscriptstyle n^{l}}$, where $l\ne k$. For example, one can think of a layer mapping an adjacency matrix to per-node features. For simplicity we will discuss the purely linear scalar-valued case, however generalization to include bias and/or general feature vectors can be done as discussed above. 
Consider the matrix $\mL\in\Real^{\scriptscriptstyle n^{l}\times n^k}$ representing the linear layer $L$, using the renumbering operator, $\mP\star$, order equivariance is equivalent to $\mat{\mL \vcc{\mP \star \tA}}=\mP\star \mat{\mL \vcc{\tA}}$. Note that while this equation looks  identical to \eqref{e:equivariance} it is nevertheless different in the sense that the $\mP\star$ operator in the l.h.s.~of this equation acts on $k$-order tensors while the one on the r.h.s.~acts on $l$-order tensor.  Still, we can transform this equation to a matrix equation as before by remembering that $\mP^{T\otimes k}$ is the matrix representation of the renumbering operator $\mP\star$  acting on $k$-tensors in the standard basis. Therefore, repeating the arguments in proof of proposition \ref{prop:inv_equivariant_iff_fixedpoint}, equivariance is equivalent to $\mP^{\otimes (k+l)}\vcc{\mL}=\vcc{\mL}$, for all permutation matrices $\mP$. This equation is solved as in section \ref{ss:solving_fixedpoint_equations}. The corresponding bases to such equivariant layers are computed as in \eqref{e:basis_general_equivariant}, with the only difference that now $\va\in [n]^k$, $\vb\in[n]^{l}$, and $\mu \in [n]^{k+l}/_\sim$.\vspace{-5pt}

\section{Experiments}\vspace{-5pt}

\paragraph{Implementation details.} We implemented our method in Tensorflow \citep{tensorflow}. The equivariant linear basis was implemented efficiently using basic row/column/diagonal summation operators, see appendix \ref{appendix:layer_imp} for details. The networks we used are composition of $1-4$ equivariant linear layers with ReLU activation between them for the equivariant function setting. For invariant function setting we further added a max over the invariant basis and $1-3$ fully-connected layers with ReLU activations.

%The networks we used are composition of $1-4$ equivariant linear layers with ReLU activation between them for the equivariant function setting. For invariant function setting we added a max over the invariant basis and $1-3$ fully-connected layers with ReLU activations.  

\begin{table}[h]
\vspace{-15pt}
  \scriptsize
  \caption{Comparison to baseline methods on synthetic experiments.}
    \centering
    \begin{tabular}{@{\hskip6pt}l@{\hskip6pt}c@{\hskip6pt}c@{\hskip6pt}c@{\hskip6pt}c@{\hskip6pt}c@{\hskip6pt}c@{\hskip6pt}c@{\hskip6pt}c@{\hskip6pt}c@{\hskip6pt}c@{\hskip6pt}c@{\hskip6pt}c@{\hskip6pt}c@{\hskip6pt}}
    \toprule
          & \multicolumn{3}{c}{Symmetric projection} & \multicolumn{3}{c}{Diagonal extraction} & \multicolumn{4}{c}{Max singular vector} & \multicolumn{3}{c}{Trace} \\
    \midrule
    \# Layers & 1 & 2 & 3 & 1     & 2     & 3     & 1     & 2     & 3     & 4     & 1     & 2     & 3 \\
    \midrule
    Trivial predictor   & 4.17 & 4.17 & 4.17 &  0.21     &  0.21     &  0.21     & 0.025  & 0.025  & 0.025  & 0.025  & 333.33 & 333.33 & 333.33 \\ 
    Hartford et al. & 2.09 & 2.09 & 2.09 &       0.81 &  0.81     &   0.81    & 0.043 &	0.044 &	0.043	& 0.043  & 316.22 & 311.55 & 307.97 \\
    \midrule
    Ours  & \textbf{1E-05} & \textbf{7E-06} & \textbf{2E-05} &  \textbf{8E-06}    &  \textbf{7E-06}     &  \textbf{1E-04}     &
     \textbf{0.015}  & \textbf{0.0084}  & \textbf{0.0054}  & \textbf{0.0016}  & \textbf{0.005} & \textbf{0.001} & \textbf{0.003} \\
    \bottomrule
  %  \bottomrule
    \end{tabular}%
  \label{tab:synth}%
  \vspace{-10pt}
\end{table}%

\paragraph{Synthetic datasets.}
We tested our method on several synthetic equivariant and invariant graph functions that highlight the differences in expressivity between our linear basis and the basis of \citet{Hartford2018}. Given an input  matrix data $\mA\in\Real^{n\times n}$ we considered: (i) projection onto the symmetric matrices $\frac{1}{2}(\mA+\mA^T)$; (ii) diagonal extraction $\diag(\diag(\mA))$ (keeps only the diagonal and plugs zeros elsewhere); (iii) computing the maximal right singular vector $\argmax_{\norm{\vv}_2=1} \norm{\mA \vv}_2$; and (iv) computing the trace $\trace(\mA)$. Tasks (i)-(iii) are equivariant while task (iv) is invariant. 
We created accordingly $4$ datasets with $10K$ train and $1K$ test examples of $40\times 40$ matrices; for tasks (i), (ii), (iv) we used i.i.d.~random matrices with uniform distribution in $[0,10]$; we used mean-squared error (MSE) as loss; for task (iii) we random matrices with uniform distribution of singular values in $[0,0.5]$ and spectral gap $\geq0.5$; due to sign ambiguity in this task we used cosine loss of the form $l(\vx,\vy)=1-\ip{\vx/\norm{\vx},\vy/\norm{\vy}}^2$.   \\
We trained networks with $1$, $2$, and $3$ hidden layers with $8$ feature channels each and a single fully-connected \revision{layer}. Both our models as well as \citet{Hartford2018} use the same architecture but with different bases for the linear layers. Table \ref{tab:synth} logs the \emph{best} mean-square error of each method over a set of hyper-parameters. We add the MSE for the trivial mean predictor.  
%For all tasks we used mean squared error loss except for the singular vector task in which (due to sign ambiguity) we used $l(x,y)=1-\left\langle \frac{x}{\| x\|},\frac{y}{\|y\|} \right\rangle ^2$.  
%
%Table \ref{tab:synth} shows the results. 
%
%Using the bases of \citet{Hartford2018} produce comparable errors to the trivial predictor while our . Thus, although equivariant, \citet{Hartford2018} seem to fail approximating rather simple functions that depend on the diagonal or the non-symmetric part of the matrix. Indeed, \citet{Hartford2018} possess only $4$ equivariant basis elements (with additional $1$ bias) in this case while we have $15$ basis elements (with $2$ biases). 

\begin{wraptable}[6]{r}{4.0cm}
%\begin{table}[htbp]
\scriptsize
  \centering
\vspace{-20pt}
  \caption{Generalization.}
    \begin{tabular}{@{\hskip6pt}l@{\hskip6pt}c@{\hskip6pt}c@{\hskip6pt}c@{\hskip6pt}}
\toprule
          & 30    & 40    & 50 \\ \midrule
    sym   & 0.0053 & 3.8E-05 & 0.0013 \\
    svd   & 0.0108 &	 0.0084 & 0.0096 \\
    diag  & 0.0150 & 1.5E-05 & 0.0055 \\
\bottomrule
    \end{tabular}%
  \label{tab:extrapolation}%
%\end{table}%
\end{wraptable}
%\revision{
This experiment emphasizes simple cases in which the additional parameters in our model, with respect to \cite{Hartford2018}, are needed. We  note that \cite{Hartford2018} target a different scenario where the permutations acting on the rows and columns of the input matrix are not necessarily the same. The assumption taken in this paper, namely, that the same permutation acts on both rows and columns, gives rise to additional parameters that are associated with the diagonal and with the transpose of the matrix (for a complete list of layers for the $k=2$ case see appendix \ref{appendix:layer_imp}). In case of an input matrix that represents graphs, these parameters can be understood as parameters that control self-edges or node features, and incoming/outgoing edges in a different way.  %}
Table \ref{tab:extrapolation} shows the result of applying the learned equivariant networks from the above experiment to graphs (matrices) of unseen sizes of $n=30$ and $n=50$. Note, that although the network was trained on a fixed size, the network provides plausible generalization to different size graphs. We note that the generalization of the invariant task of computing the trace did not generalize well to unseen sizes and probably requires training on different sizes as was done in the datasets below.

\begin{table}[h]
\vspace{-15pt}
  \centering
  \scriptsize
  \caption{Graph Classification Results.}
    \begin{tabular}{@{\hskip8pt}l@{\hskip8pt}r@{\hskip8pt}r@{\hskip8pt}r@{\hskip8pt}r@{\hskip8pt}r@{\hskip8pt}r@{\hskip8pt}r@{\hskip8pt}r@{\hskip8pt}}
    \toprule
    dataset & \multicolumn{1}{l}{MUTAG} & \multicolumn{1}{l}{PTC} & \multicolumn{1}{l}{PROTEINS} & \multicolumn{1}{l}{NCI1} & \multicolumn{1}{l}{NCI109} & \multicolumn{1}{l}{COLLAB} & \multicolumn{1}{l}{IMDB-B} & \multicolumn{1}{l}{IMDB-M} \\
    \midrule
    size  & 188   & 344   &  1113  & 4110  & 4127  & 5000  & 1000  & 1500 \\
    classes & 2     & 2     &  2     & 2     & 2     & 3     & 2     & 3 \\
    avg node \# & 17.9  & 25.5  &  39.1  & 29.8  & 29.6  & 74.4  & 19.7  & 13 \\
    \midrule
    \multicolumn{9}{c}{Results} \\
    \midrule
    DGCNN & 85.83$\pm$1.7 & 58.59$\pm$2.5 &  75.54$\pm$0.9 & 74.44$\pm$0.5 & NA    & 73.76$\pm$0.5 & 70.03$\pm$0.9 & 47.83$\pm$0.9 \\
    PSCN {\tiny (k=10)} & 88.95$\pm$4.4 & 62.29$\pm$5.7 & 75$\pm$2.5 & 76.34$\pm$1.7 & NA    & 72.6$\pm$2.2 & 71$\pm$2.3 & 45.23$\pm$2.8 \\
    DCNN  & NA    & NA    &  61.29$\pm$1.6 & 56.61$\pm$ 1.0 & NA    & 52.11$\pm$0.7 & 49.06$\pm$1.4 & 33.49$\pm$1.4 \\
    ECC   & 76.11 & NA    &  NA    & 76.82 & 75.03 & NA    & NA    & NA \\
    DGK   & 87.44$\pm$2.7 & 60.08$\pm$2.6 &  75.68$\pm$0.5 & 80.31$\pm$0.5 & 80.32$\pm$0.3 & 73.09$\pm$0.3 & 66.96$\pm$0.6 & 44.55$\pm$0.5 \\
    DiffPool & NA    & NA    &  78.1  & NA    & NA    & 75.5  & NA    & NA \\
    CCN   & 91.64$\pm$7.2 & 70.62$\pm$7.0 &  NA    & 76.27$\pm$4.1 & 75.54$\pm$3.4 & NA    & NA    & NA \\
    GK    & 81.39$\pm$1.7 & 55.65$\pm$0.5 &  71.39$\pm$0.3 & 62.49$\pm$0.3 & 62.35$\pm$0.3 & NA    & NA    & NA \\
    RW    & 79.17$\pm$2.1 & 55.91$\pm$0.3 &  59.57$\pm$0.1 & $>3$ days & NA    & NA    & NA    & NA \\
    PK    & 76$\pm$2.7 & 59.5$\pm$2.4 &  73.68$\pm$0.7 & 82.54$\pm$0.5 & NA    & NA    & NA    & NA \\
    WL    & 84.11$\pm$1.9 & 57.97$\pm$2.5 &  74.68$\pm$0.5 & 84.46$\pm$0.5 & 85.12$\pm$0.3 & NA    & NA    & NA \\
    
    FGSD    & 92.12 & 62.80 &  73.42 & 79.80 & 78.84 & 80.02  & 73.62    & 52.41 \\
    
    AWE-DD   & NA & NA &  NA & NA & NA & 73.93$\pm 1.9$    & 74.45 $\pm 5.8$    & 51.54 $ֿ\pm3.6$ \\
    AWE-FB    & 87.87$\pm$9.7&NA &  NA & NA & NA & 70.99 $\pm $ 1.4  & 73.13 $\pm$3.2    & 51.58 $\pm$ 4.6 \\
    \midrule
    ours  & 84.61$\pm$10     & 59.47$\pm$7.3     &  75.19$\pm$4.3     & 73.71$\pm$2.6     & 72.48$\pm$2.5     & 77.92$\pm$1.7     & 71.27$\pm$4.5     & 48.55$\pm$3.9 \\
    \bottomrule
    %\bottomrule
    \end{tabular}%
  \label{tab:res}%
 \vspace{-10pt}
\end{table}%

\paragraph{Graph classification.}
We tested our method on standard benchmarks of graph classification. We use $8$ different real world datasets from the benchmark of \cite{Yanardag2015}: five of these datasets originate from bioinformatics while the other three come from social networks. In all datasets the adjacency matrix of each graph is used as input and a categorial label is assigned as output. In the bioinformatics datasets node labels are also provided as inputs. These node labels can be used in our framework by placing their $1$-hot representations on the diagonal of the input.   

Table \ref{tab:res} specifies the results for our method compared to state-of-the-art deep and non-deep graph learning methods. We follow the evaluation protocol including the 10-fold splits of \citet{Zhang}. For each dataset we selected learning and decay rates on one random fold. In all experiments we used a fixed simple architecture of $3$ layers with $(16,32,256)$ features accordingly. The last equivariant layer is followed by an invariant max layer according to the invariant basis. We then add two fully-connected hidden layers with $(512, 256)$ features. \\
We compared our results to seven deep learning methods: DGCNN \citep{Zhang}, PSCN \citep{Niepert2016}, DCNN \citep{Atwood2015}, ECC \citep{Simonovsky2017}, DGK \citep{Yanardag2015}, DiffPool \citep{Ying2018}  and CCN \citep{Kondor2018}. We also compare our results to four popular graph kernel methods: Graphlet Kernel (GK) \citep{shervashidze2009efficient},Random Walk Kernel (RW) \citep{vishwanathan2010graph}, Propagation Kernel (PK) \citep{ neumann2016propagation}, and Weisfeiler-lehman kernels (WL) \citep{shervashidze2011weisfeiler} and two recent feature-based methods: Family of Graph Spectral Distance (FGSD) \citep{verma2017hunt} and Anonymous Walk Embeddings (AWE) \citep{ivanov2018anonymous}.
Our method achieved results comparable to the state-of-the-art on the three social networks datasets, and slightly worse results than state-of-the-art on the biological datasets.

\vspace{-5pt}

\section{Generalizations to multi-node sets} \vspace{-5pt}
\label{ss:multi_node}
Lastly, we provide a generalization of our framework to data that is given on tuples of nodes from a \emph{collection} of node sets $\sV_1,\sV_2,\ldots,\sV_m$ of sizes $n_1,n_2,\ldots,n_m$ (resp.), namely $\tA\in\Real^{\scriptscriptstyle n_1^{k_1}\times n_2^{k_2} \times \cdots \times n_m^{k_m}}$. We characterize invariant linear layers $L:\Real^{\scriptscriptstyle n_1^{k_1} \times \cdots \times n_m^{k_m}}\too \Real$ and equivariant linear layer $L:\Real^{\scriptscriptstyle n_1^{k_1} \times \cdots \times n_m^{k_m}}\too \Real^{\scriptscriptstyle n_1^{l_1} \times \cdots \times n_m^{l_m}}$, where for simplicity we do not discuss features that can be readily added as discussed in section \ref{s:inv_equi_layers}. Note that the case of $k_i=l_i=1$ for all $i=1,\dots,m$ is treated in \cite{Hartford2018}. The reordering operator now is built out of permutation matrices $\mP_i\in\Real^{n_i\times n_i}$ ($p_i$ denotes the permutation), $i=1,\ldots,m$, denoted $\mP_{1:m}\star$, and defined as follows: the $(p_1(\va_1), p_2(\va_2), \ldots, p_m(\va_m))$-th entry of the tensor $\mP_{1:m}\star \tA$, where $\va_i\in [n_i]^{k_i}$ is defined to be the $(\va_1,\va_2,\ldots,\va_m)$-th entry of the tensor $\tA$. Rewriting the invariant and equivariant equations, \ie,  \eqref{e:invariance}, \ref{e:equivariance}, in matrix format, similarly to before, we get the fixed-point equations: $\mM \vcc{\mL}=\vcc{\mL}$ for invariant, and $\mM\otimes \mM \vcc{\mL}  = \vcc{\mL}$ for equivariant, where $\mM=\mP_1^{\otimes k_1}\otimes \cdots \otimes \mP_m^{\otimes k_m}$. The solution of these equations would be linear combinations of basis tensor similar to \eqref{e:basis_general} of the form 
\begin{equation}\label{e:basis_multiset}
		\text{invariant:} \ \tB^{\lambda_1,\ldots,\lambda_m}_{\va_1,\ldots,\va_m }=\begin{cases}\scriptstyle
		1 & \scriptstyle \va_i \in \lambda_i,\ \forall i \\ \scriptstyle
		0 & \scriptstyle\text{otherwise}
	\end{cases} \, 
	;  \quad
	\text{equivariant:}\  \tB^{\mu_1,\ldots,\mu_m}_{\va_1,\ldots,\va_m, \vb_1,\ldots,\vb_m }\hspace{-5pt}=\begin{cases}\scriptstyle
		1 & \scriptstyle (\va_i,\vb_i) \in \mu_i,\ \forall i \\ \scriptstyle
		0 & \scriptstyle\text{otherwise}
	\end{cases}
\end{equation}
where $\lambda_i\in[n_i]^{k_i}$, $\mu_i\in[n_i]^{k_i+l_i}$, $\va\in [n_i]^{k_i}$, $\vb_i\in[n_i]^{l_i}$. The number of these tensors is $\prod_{i=1}^m \bell(i)$ for invariant layers and $\prod_{i=1}^m \bell(k_i+l_i)$ for equivariant layers. Since these are all linear independent (pairwise disjoint support of non-zero entries) we need to show that their number equal the dimension of the solution of the relevant fixed-point equations above. This can be done again by similar arguments to the proof of proposition \ref{prop:sol_of_fixedpoint} or as shown in appendix \ref{a:haggai_identity}. To summarize:
\begin{theorem}\label{thm:multi_node}
The linear space of invariant linear layers $L:\Real^{\scriptscriptstyle n_1^{k_1}\times n_2^{k_2} \times \cdots \times n_m^{k_m}}\too \Real$ is of dimension $\prod_{i=1}^m \bell(k_i)$. The equivariant linear layers $L:\Real^{\scriptscriptstyle n_1^{k_1}\times n_2^{k_2} \times \cdots \times n_m^{k_m}}\too \Real^{\scriptscriptstyle n_1^{l_1}\times n_2^{l_2} \times \cdots \times n_m^{l_m}}$ has dimension $\prod_{i=1}^m \bell(k_i+l_i)$. Orthogonal bases for these layers are listed in \eqref{e:basis_multiset}.  
\end{theorem}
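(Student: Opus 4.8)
The plan is to reduce Theorem~\ref{thm:multi_node} to the single-node-set case (Propositions~\ref{prop:inv_equivariant_iff_fixedpoint} and~\ref{prop:sol_of_fixedpoint}) by the same two-stage argument used there: first show that invariance/equivariance is equivalent to a fixed-point equation, then solve that fixed-point equation combinatorially. For the first stage, I would write out the invariance condition $\mL\vcc{\mP_{1:m}\star\tA}=\mL\vcc{\tA}$ and the equivariance condition $\mat{\mL\vcc{\mP_{1:m}\star\tA}}=\mP_{1:m}\star\mat{\mL\vcc{\tA}}$, and observe that the matrix representing the operator $\mP_{1:m}\star$ acting on $\Real^{\scriptscriptstyle n_1^{k_1}\times\cdots\times n_m^{k_m}}$ in the standard tensor basis is exactly $\mM^T=(\mP_1^T)^{\otimes k_1}\otimes\cdots\otimes(\mP_m^T)^{\otimes k_m}$. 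Since each $\mP_i^{T\otimes k_i}$ is a permutation matrix and these commute as Kronecker factors, $\mM$ is itself a permutation matrix with inverse $\mM^T$; applying the Kronecker identity $\vcc{\mX\mA\mY}=\mY^T\otimes\mX\,\vcc{\mA}$ exactly as in the proof of Proposition~\ref{prop:inv_equivariant_iff_fixedpoint} turns the two conditions into $\mM\vcc{\mL}=\vcc{\mL}$ and $\mM\otimes\mM\,\vcc{\mL}=\vcc{\mL}$ respectively, for all tuples of permutation matrices $\mP_1,\ldots,\mP_m$.

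For the second stage, I would rewrite the fixed-point equation as an invariance of a tensor $\tX$ under the action $\tX\mapsto\mQ_{1:m}\star\tX$ (with $\mQ_i=\mP_i^T$) on $\Real^{\scriptscriptstyle n_1^{\ell_1}\times\cdots\times n_m^{\ell_m}}$ where $\ell_i=k_i$ in the invariant case and $\ell_i=k_i+l_i$ in the equivariant case. The key point is that this action permutes the index multi-set $[n_1]^{\ell_1}\times\cdots\times[n_m]^{\ell_m}$ according to the \emph{product} of the $m$ separate symmetric-group actions, so two multi-indices $(\va_1,\ldots,\va_m)$ and $(\vb_1,\ldots,\vb_m)$ lie in the same orbit if and only if $\va_i$ and $\vb_i$ have the same equality pattern \emph{for every $i$ separately} (no cross-block comparisons are possible because $p_i$ only acts within block $i$). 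Hence the orbits are indexed by $m$-tuples of partitions $(\lambda_1,\ldots,\lambda_m)$ with $\lambda_i\in[n_i]^{k_i}/_\sim$ (resp.\ $\mu_i\in[n_i]^{k_i+l_i}/_\sim$), and by the identical argument to Proposition~\ref{prop:sol_of_fixedpoint} the solutions are exactly the tensors constant on orbits, with orthogonal basis the orbit indicators~\eqref{e:basis_multiset}. Counting orbits gives $\prod_{i=1}^m\bell(k_i)$ and $\prod_{i=1}^m\bell(k_i+l_i)$ by multiplicativity of independent choices, which is the claimed dimension.

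The only genuinely new ingredient beyond the single-set proof is the orbit characterization, so the main obstacle — really a bookkeeping point rather than a hard step — is verifying carefully that the joint action of $(p_1,\ldots,p_m)$ induces exactly the product equivalence relation on indices, i.e.\ that one can independently realize any prescribed pair of equality patterns block-by-block by choosing each $p_i$ separately. This is immediate once stated, because the blocks are disjoint and the permutations act independently, but it is the place where one must be explicit that no identification across different node sets $\sV_i\ne\sV_j$ can occur. Everything else is a routine transcription of the proofs of Propositions~\ref{prop:inv_equivariant_iff_fixedpoint} and~\ref{prop:sol_of_fixedpoint}, with $\mP^{\otimes k}$ replaced by $\mM$ and the single partition replaced by an $m$-tuple; the biases-and-features extension is handled exactly as in Theorem~\ref{thm:char_of_inv_equi_one_node_set_with_bias_and_features}. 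As in the main text, the dimension count can alternatively be derived from the combinatorial identity of Appendix~\ref{a:haggai_identity}.
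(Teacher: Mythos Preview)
Your proposal is correct and matches the argument the paper sketches in Section~\ref{ss:multi_node}: derive the fixed-point equations $\mM\vcc{\mL}=\vcc{\mL}$ and $\mM\otimes\mM\,\vcc{\mL}=\vcc{\mL}$, then count orbits of the product action to obtain the basis~\eqref{e:basis_multiset} and the dimension $\prod_i\bell(k_i)$ (resp.\ $\prod_i\bell(k_i+l_i)$), exactly extending Proposition~\ref{prop:sol_of_fixedpoint}. The paper's \emph{formal} proof in Appendix~\ref{a:haggai_identity} instead takes the alternative route you mention in your last sentence: it computes the dimension directly as $\trace(\phi)$ for the averaging projection~\eqref{e:projection}, using multiplicativity of trace under Kronecker products together with Proposition~\ref{prop:haggai_equality}, which gives $\prod_i\frac{1}{n_i!}\sum_{\mP_i}\trace(\mP_i)^{k_i}=\prod_i\bell(k_i)$ in one line. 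Both routes are acknowledged in the main text as valid; your orbit argument has the advantage of simultaneously exhibiting the basis, while the trace computation is shorter once Proposition~\ref{prop:haggai_equality} is in hand.
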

\subsection*{Acknowledgments}
This research was supported in part by the European Research Council (ERC Consolidator Grant, "LiftMatch" 771136) and the Israel Science Foundation (Grant No. 1830/17).

%\subsubsection*{Acknowledgments}

%\yl{TODO}

%\section{Discussion and Conclusions}
%We presented a full characterization of linear invariant and equivariant layers defined on graph data and showed their usefulness by constructing neural networks and applying them to graph classification benchmarks. The independence of the basis size and node set sizes allows applying these networks to graphs of different sizes. The main limitation of our framework is related to memory usage: working on order-$k$ tensors requires keeping order of $n^k$ values for all intermediate tensors in the network.
%
%As future work, it is interesting to apply this model to general graph-related learning problems such as graph or shape matching \citep{monti2017geometric} where the input can be a distance or adjacency matrices. Another interesting venue for future work is to prove some variant of the universality theorem for these invariant/equivariant models, \ie, prove that the set of functions that can be represented by these models is dense in some sensible space of invariant/equivariant functions. Lastly, we would like to apply these invariant/equivariant layers in problems in computer vision where higher order features are extracted; for example, pairwise image correspondences for structure from motion (SfM). 

\bibliography{graph.bib}
\bibliographystyle{iclr2019_conference}

\begin{appendices}
\section{Efficient implementation of layers} \label{appendix:layer_imp}
For fast execution of order-$2$ layers we implemented the following $15$ operations which can be easily shown to span the basis discussed in the paper. We denote by $\one\in\Real^n$ the vector of all ones. 
\begin{enumerate}
	\item The identity and transpose operations: $L(\mA)=\mA$, $L(\mA)=\mA^T$. 
	\item The $\diag$ operation: $L(\mA) = \diag(\diag(\mA))$.
	\item Sum of rows replicated on rows/ columns/ diagonal: $L(\mA)=\mA\one\one^T$, $L(\mA)=\one(\mA\one)^T$, $L(\mA)=\diag(\mA\one)$.
	\item Sum of columns replicated on rows/ columns/ diagonal: $L(\mA)=\mA^T\one\one^T$, $L(\mA)=\one(\mA^T\one)^T$, $L(\mA)=\diag(\mA^T\one)$.
		\item Sum of all elements replicated on all matrix/ diagonal: $L(\mA)=(\one^T\mA\one)\cdot \one\one^T$, $L(\mA)=(\one^T\mA\one)\cdot\diag(\one)$.
	\item  Sum of diagonal elements replicated on all matrix/diagonal: $L(\mA)=(\one^T\diag(\mA))\cdot\one\one^T$, $L(\mA)=(\one^T\diag(\mA))\cdot \diag(\one)$. 
	\item  Replicate diagonal elements on rows/columns: $L(\mA)=\diag(\mA)\one^T$, $L(\mA)=\one\diag(\mA)^T$.
\end{enumerate}
We normalize each operation to have unit max operator norm. We note that in case the input matrix is symmetric, our basis reduces to 11 elements in the first layer. If we further assume the matrix has zero diagonal we get a 6 element basis in the first layer. In both cases our model is more expressive than the 4 element basis of \cite{Hartford2018} and as the output of the first layer (or other inner states) need not be symmetric nor have zero diagonal the deeper layers can potentially make good use of the full 15 element basis.

\section{Invariant and equivariant subspace dimensions}
\label{a:haggai_identity}

We prove a useful combinatorial fact as a corollary of proposition \ref{prop:sol_of_fixedpoint}. This fact will be used later to easily compute the dimensions of more general spaces of invariant and equivariant linear layers.
 We use the fact that if $V$ is a representation of a finite group $G$ then  
\begin{equation}\label{e:projection} 
 \phi=\frac{1}{|G|}\sum_{g\in G}g\in \textrm{End}(V)
\end{equation}
is a projection onto $V^G=\left\lbrace \vv\in V~\vert ~ g\vv =\vv, ~\forall g\in G\right\rbrace $, the subspace of fixed points in $V$ under the action of $G$,  and consequently that $\trace(\phi)=\dim(V^G)$ (see \cite{fulton2013representation}  for simple proofs). 
 
\begin{proposition}\label{prop:haggai_equality}
The following formula holds:
$$\frac{1}{n!}\sum_{\mP\in\Pi_n} \trace(\mP)^k= \bell(k),$$ where $\Pi_n$ is the matrix permutation group of dimensions $n\times n$. 
\end{proposition}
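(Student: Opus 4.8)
The plan is to read the left-hand side as the trace of the group-averaging projection \eqref{e:projection} attached to a well-chosen representation, and then identify that trace with $\bell(k)$ via Proposition \ref{prop:sol_of_fixedpoint}. Take $V=\Real^{\scriptscriptstyle n^k}\cong(\Real^n)^{\otimes k}$ and let $G=\Pi_n$ act on $V$ by $\mP\mapsto \mP^{\otimes k}$; this is a representation of $G$ because $(\mP\mR)^{\otimes k}=\mP^{\otimes k}\mR^{\otimes k}$ (and it is even orthogonal, each $\mP^{\otimes k}$ being itself a permutation matrix, though that is not needed). By the cited fact \eqref{e:projection} with $|G|=n!$, the operator $\phi=\frac{1}{n!}\sum_{\mP\in\Pi_n}\mP^{\otimes k}$ is a projection onto $V^G$, hence $\trace(\phi)=\dim(V^G)$.

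Next I would compute $\trace(\phi)$ in two independent ways. First, by linearity of the trace together with the elementary multiplicativity $\trace(\mA\otimes\mB)=\trace(\mA)\trace(\mB)$ (iterated $k-1$ times, or read off from the eigenvalues of a Kronecker product), we get $\trace(\mP^{\otimes k})=\trace(\mP)^k$, so $\trace(\phi)=\frac{1}{n!}\sum_{\mP\in\Pi_n}\trace(\mP)^k$, which is precisely the left-hand side of the asserted identity. Second, the condition $\mP^{\otimes k}\vcc{\tX}=\vcc{\tX}$ for all $\mP$ is, after the substitution $\mQ=\mP^T$ (as in Section \ref{ss:solving_fixedpoint_equations}, using $\mP^{\otimes k}\vcc{\tX}=\vcc{\mQ\star\tX}$), exactly the fixed-point system \eqref{e:fixedpoint_simple} with $\ell=k$; since $\mQ$ ranges over all permutation matrices as $\mP$ does, $V^G$ coincides with the solution set of \eqref{e:fixedpoint_simple}, whose dimension is $\bell(k)$ by Proposition \ref{prop:sol_of_fixedpoint}. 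Combining the two computations yields $\frac{1}{n!}\sum_{\mP\in\Pi_n}\trace(\mP)^k=\dim(V^G)=\bell(k)$.

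I do not expect a genuine obstacle; the work is essentially bookkeeping: checking that $\mP\mapsto\mP^{\otimes k}$ is a homomorphism, that the averaging lemma applies verbatim, and that the representation-theoretic fixed-point space $V^G$ really is the solution set of \eqref{e:fixedpoint_simple} under the harmless relabeling $\mQ=\mP^T$. The one point worth flagging is that Proposition \ref{prop:sol_of_fixedpoint} delivers $\bell(k)$ exactly only once $n\ge k$: for $n<k$ the identical argument produces instead the number of partitions of $[k]$ into at most $n$ blocks, since partitions with more than $n$ blocks correspond to empty basis tensors $\tB^\gamma$; this is the regime relevant to the paper. As an alternative to invoking Proposition \ref{prop:sol_of_fixedpoint}, one could compute $\dim(V^G)$ directly by Burnside-type orbit counting on the standard basis $\ve^{(i_1)}\otimes\cdots\otimes\ve^{(i_k)}$ of $V$: the $S_n$-orbits of index tuples in $[n]^k$ are exactly the equality-pattern classes, so for $n\ge k$ there are $\bell(k)$ of them; but reusing Proposition \ref{prop:sol_of_fixedpoint} is shorter.
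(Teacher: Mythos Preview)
Your proposal is correct and follows essentially the same route as the paper: apply the averaging-projection formula \eqref{e:projection} to the representation $\mP\mapsto\mP^{\otimes k}$ on $\Real^{\scriptscriptstyle n^k}$, use multiplicativity of trace under Kronecker products to obtain $\trace(\mP^{\otimes k})=\trace(\mP)^k$, and identify $V^G$ with the solution space of the fixed-point equations so that Proposition~\ref{prop:sol_of_fixedpoint} gives $\dim(V^G)=\bell(k)$. Your additional remarks (the $\mQ=\mP^T$ relabeling, the $n\ge k$ caveat, and the Burnside alternative) are all sound and go slightly beyond what the paper spells out, but the core argument is the same.
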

\begin{proof}
		In  our case, the vector space is the space of order-$k$ tensors and the group acting on it is the matrix group $G=\left\lbrace \mP^{\otimes k}\ \vert \ \mP\in \Pi_m  \right\rbrace$. 
$$\dim(V^G)=\trace(\phi)=\frac{1}{|G|}\sum_{g\in G}\trace(g)=\frac{1}{n!}\sum_{\mP\in \Pi_n}\trace(\mP^{\otimes k})=\frac{1}{n!}\sum_{\mP\in \Pi_n}\trace(\mP)^k, $$
where we used the multiplicative law of the trace with respect to Kronecker product. Now we use proposition \ref{prop:sol_of_fixedpoint} noting that in this case $V^G$ is the solution space of the fixed-point equations. Therefore, $\dim(V^G)=\bell(k)$ and the proof is finished. 
\end{proof}

Recall that for a permutation matrix $\mP$, $\trace(\mP)=|\left\lbrace i\in[n]~ \text{s.t. $\mP$ fixes $e_i$ } \right\rbrace |$. Using this, we can interpret the equation in proposition \ref{prop:haggai_equality} as the $k$-th moment of a random variable counting the number of fixed points of a permutation, with uniform distribution over the permutation group. Proposition \ref{prop:haggai_equality} proves that the $k$-th moment of this random variable is the $k$-th Bell number.

We can now use proposition \ref{prop:haggai_equality} to calculate the dimensions of two linear layer spaces: (i) Equivariant layers acting on order-$k$ tensors with features (as in \ref{s:inv_equi_layers}); and (ii) multi-node sets (as in section \ref{ss:multi_node}).  	
\begin{reptheorem}{thm:char_of_inv_equi_one_node_set_with_bias_and_features}
The space of invariant (equivariant) linear layers $\Real^{\scriptscriptstyle n^k,d}\too \Real^{\scriptscriptstyle d'} $ ($\Real^{\scriptscriptstyle n^k \times d}\too \Real^{\scriptscriptstyle n^k \times d'}$) is of dimension $dd'\bell(k)+d'$ (for equivariant: $dd'\bell(2k)+d'\bell(k)$) with basis elements defined in \eqref{e:basis_general}; equations \ref{e:layer_inv1} (\ref{e:layer_equi1}) show the general form of such layers.
%
%	A linear layer $\Real^{\scriptscriptstyle n^k,d}\too \Real^{\scriptscriptstyle d'} $ ($\Real^{\scriptscriptstyle n^k \times d}\too \Real^{\scriptscriptstyle n^k \times d'} $) is invariant (equivariant) iff it is of the form of equations \ref{e:layer_inv1}-\ref{e:layer_inv2} (\ref{e:layer_equi1}-\ref{e:layer_equi2}). 
\end{reptheorem}
\begin{proof}
	We prove the dimension formulas for the invariant case. The equivariant case is proved similarly.  The solution space for the fixed point equations is the set $V^G$ for the matrix group $G=\left\lbrace\mP^{\otimes k}\otimes \mI_d \otimes  \mI_{d'}\ \vert \ \mP\in\Pi_n \right\rbrace $. Using the projection formula \ref{e:projection} we get that the dimension of the solution subspace, which is the space of invariant linear layers, can be computed as follows:
	$$\dim (V^G)=\frac{1}{n!}\sum_{\mP\in \Pi_n}\,\trace(\mP)^k\,\trace(I_d)\trace(I_{d'})=\left(  \frac{1}{n!}\sum_{\mP\in \Pi_n}\trace(\mP)^k\right) \trace(I_d)\,\trace(I_{d'})=d\cdot d'\cdot \bell(k).$$
	
\end{proof}

\begin{reptheorem}{thm:multi_node}
The linear space of invariant linear layers $L:\Real^{\scriptscriptstyle n_1^{k_1}\times n_2^{k_2} \times \cdots \times n_m^{k_m}}\too \Real$ is of dimension $\prod_{i=1}^m \bell(k_i)$. The equivariant linear layers $L:\Real^{\scriptscriptstyle n_1^{k_1}\times n_2^{k_2} \times \cdots \times n_m^{k_m}}\too \Real^{\scriptscriptstyle n_1^{l_1}\times n_2^{l_2} \times \cdots \times n_m^{l_m}}$ has dimension $\prod_{i=1}^m \bell(k_i+l_i)$. Orthogonal bases for these layers are listed in \eqref{e:basis_multiset}.  
\end{reptheorem}
\begin{proof}
In this case we get the fixed-point equations: $\mM \vcc{\mL}=\vcc{\mL}$ for invariant, and $\mM\otimes \mM \vcc{\mL}  = \vcc{\mL}$ for equivariant, where $\mM=\mP_1^{\otimes k_1}\otimes \cdots \otimes \mP_m^{\otimes k_m}$. Similarly to the previous theorem, plugging  $M$ into \eqref{e:projection}, using the trace multiplication rule and proposition \ref{prop:haggai_equality} we get the above formulas. 
\end{proof}

 \revision{
\section{Implementing message passing with our model  }\label{appendix:msg_pass} 
In this appendix we show that our model can approximate message passing layers as defined in \cite{Gilmer2017} to an arbitrary precision, and consequently that our model is able to approximate any network consisting of several such layers. The key idea is to mimic multiplication of features by the adjacency matrix, which allows summing over local neighborhoods. This can be implemented using our basis.
%We note that application of MLPs to the feature dimension is a powerful mechanism, that is used by PointNet \cite{qi2017pointnet} and DeepSets \cite{zaheer2017deep}, for instance. 

\begin{theorem} \label{thm:message_passing}
	Our model can represent message passing layers to an arbitrary precision on compact sets. 
\end{theorem}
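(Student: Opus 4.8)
The plan is to realize a single message-passing layer as a short composition of our equivariant linear layers and ReLU activations, and then push the approximation error through a stack of such layers by a routine telescoping estimate. Throughout, a graph on $n$ nodes with binary adjacency matrix $\mA$, node features $h\in\Real^{n\times d}$ and edge features $e\in\Real^{n\times n\times d_e}$ is encoded as a single order-$2$ input tensor $\tA\in\Real^{n\times n\times(d+d_e+1)}$: the node features sit on the diagonal, the edge features off the diagonal, and one dedicated channel holds $\mA$ itself. That adjacency channel is carried along unchanged (via the basis element that is the identity on both the node indices and the feature index), so the graph structure is visible to every layer.

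First I would record the elementary fact that a \emph{pointwise MLP} — a map applying the same ReLU network independently to the feature vector at each index pair, or to the feature vector at each diagonal entry — is representable in our model. Indeed, the ``spatial identity'' operator $L(\tA)=\tA$ and the ``extract the diagonal and place it back on the diagonal'' operator $L(\tA)=\diag(\diag(\tA))$ are both among the order-$2$ equivariant basis elements of Theorem~\ref{thm:char_of_inv_equi_one_node_set_with_bias_and_features} (items~1 and~2 in Appendix~\ref{appendix:layer_imp}); composing these with the feature-wise linear maps (which are likewise in the basis, channel by channel) and interleaving ReLUs yields exactly pointwise MLPs with weights shared across index tuples, and these are universal approximators of continuous maps on compact sets. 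The only other basis elements I need are the broadcasts $\tA\mapsto\diag(\tA)\one^\top$ and $\tA\mapsto\one\,\diag(\tA)^\top$ (replicate diagonal values along rows / columns, item~7) and the neighbour sum $\tA\mapsto\diag(\tA\one)$ (row sums placed on the diagonal, item~3).

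With these pieces, one message-passing round $m_v^{t+1}=\sum_{w\in N(v)}M_t(h_v^t,h_w^t,e_{vw})$, $h_v^{t+1}=U_t(h_v^t,m_v^{t+1})$ is assembled in four steps: (a) broadcast the diagonal node features along rows and along columns, producing a tensor whose $(v,w)$ entry carries the triple $(h_v^t,h_w^t,e_{vw})$ together with $\mA_{vw}$; (b) apply a pointwise MLP approximating the modified message $\widetilde M_t(a,b,e,x):=x\cdot M_t(a,b,e)$, which vanishes exactly on non-edges because $\mA$ is binary, producing $S\in\Real^{n\times n\times d'}$; (c) apply the neighbour-sum operator, so that $\sum_w S_{vw}=\sum_{w\in N(v)}M_t(h_v^t,h_w^t,e_{vw})=m_v^{t+1}$ appears on the diagonal; (d) apply a pointwise MLP on the diagonal approximating $U_t(h_v^t,m_v^{t+1})$, writing $h_v^{t+1}$ onto the diagonal while $\mA$ (and $e$, if needed) is passed through untouched on its own channels. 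Steps (a) and (c) are exact equivariant linear layers; only (b) and (d) introduce error.

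Finally I would close with the standard error-propagation argument: restrict inputs to a compact set $K$; since all maps involved are continuous, every intermediate tensor produced by the construction with the \emph{exact} $M_t,U_t$ stays in some compact set, and by uniform continuity (or local Lipschitzness) one can choose the accuracy of each pointwise MLP small enough that, after the $T$ rounds, the output is within the prescribed $\eps$ of the true MPNN output — an induction on $t$ using $\|\widehat x_t-x_t\|\le\delta_t+L_t\|\widehat x_{t-1}-x_{t-1}\|$. A readout $R(\{h_v^T\})$, if present, is handled by one invariant linear layer (sum over the diagonal, Theorem~\ref{thm:char_of_inv_equi_one_node_set}) followed by an MLP. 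I expect the one genuinely delicate point to be step (c): the neighbourhood sum must be produced as a \emph{linear} equivariant operation rather than as a product of two input-dependent quantities, and the device that makes this work is folding the adjacency matrix into the message function in step (b), so that ``sum over neighbours'' collapses to ``sum one tensor index'', which is a basis element of our layer space.
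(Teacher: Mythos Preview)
Your proposal is correct and follows essentially the same construction as the paper: encode adjacency, edge features, and diagonal node features into an order-$2$ tensor; broadcast the diagonal to rows and columns; approximate the message function with a pointwise MLP; mask by the adjacency; row-sum onto the diagonal; then approximate the update with a diagonal MLP. The only notable difference is that you fold the adjacency masking into the message MLP via $\widetilde M_t(a,b,e,x)=x\cdot M_t(a,b,e)$, whereas the paper uses two separate MLPs (one for $M_t$, then a second to approximate the pointwise product with $a_{uv}$); your variant is a mild economy but not a different argument.
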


\begin{proof} 
	Consider input vertex data $\mH=(h_u)\in \R^{n \times d}$ ($n$ is the number of vertices in the graph, and $d$ is the input feature depth), adjacency matrix $\mA=(a_{uv})\in \R^{n \times n}$  of the graph, and additional edge features $\tE=(e_{uv})\in\R^{n\times n\times l}$. 
	Recall that a message passing layer of \cite{Gilmer2017} is of the form:
	\begin{subequations}
		\begin{align}
		& m^{t+1}_{u} = \sum_{v\in N(u)} M_t(h^t_u,h^t_v,e_{uv})\\
		& h^{t+1}_u = U_t(h^t_u,m^{t+1}_u)		
		\end{align}	
	\end{subequations}
	where $u,v$ are nodes in the graph, $h^t_u$ is the feature vector associated with $u$ in layer $t$, and $e_{uv}$ are additional edge features. We denote the number of output features of $M_t$ by $d'$. 
	
	In our setting we represent this data using a tensor $\tY\in \R^{n \times n \times (1+l+d)}$ where the first channel is the adjacency matrix $\mA$, the next $l$ channels are edge features, and the last $d$ channels are diagonal matrices that hold $\mX$. 
	%We assume that the $1+l$ first feature channels are always $\mA,\mE$ which can be done by copying them in each layer. 
	
	Let us construct a message passing layer using our model: 
	\begin{enumerate}
		\item Our first step is constructing an $n\times n\times (1+l+2d)$ tensor. In the first channels we put the adjacency matrix $\mA$ and the edge features $\tE$. In the next $d$ channels we replicate the features on the rows, and in the last $d$ channels we replicate features on the columns. The output tensor $\tZ^1$ has the form $\tZ^1_{u,v}=[a_{uv},e_{uv}, h^t_u, h^t_v]$.
		
		\item Next, we copy the feature channels $[a_{uv},e_{uv},h_u^t]$ to the output tensor $\tZ^2$. We then apply a multilayer perceptron (MLP) on the last $l+2d$ feature dimensions of $\tZ^1$ that approximates $M_t$ \citep{hornik1991approximation}. The output tensor of this stage is $\tZ^2_{u,v}=[a_{uv},e_{uv},h^t_u, M_t(h^t_u, h^t_v, e_{uv})+\epsilon_1]$.

		\item  Next, we would like to perform point-wise multiplication $\tZ^2_{u,v,1}\odot \tZ^2_{u,v,(l+d+2):end}$. This step would zero out the outputs of $M_t$ for non-adjacent nodes $u,v$. As this point-wise multiplication is not a part of our framework we can use an MLP on the feature dimension to approximate it and get $\tZ^3_{u,v}=[a_{uv}, e_{uv}, h^t_u, a_{uv}M_t(h^t_u, h^t_v)+\epsilon_2]$.  
		
		\item As before we copy the feature channels $[a_{uv},e_{uv},h_u^t]$.  We now apply a sum over the rows ($v$ dimension) on the $M_t$ output channels. We put the output of this sum on the diagonal of $\tZ^4$ in separate channels. We get $\tZ^4_{u,v}=[ a_{uv}, e_{uv}, h^t_u, \delta_{uv} \sum_{w\in N(u)} M_t(h^t_u, h^t_w) +\epsilon_3 ]$, where $\delta_{uv}$ is the Kronecker delta. We get a tensor $\tZ^4\in\Real^{n\times n \times (1+l+d+d')}$. 
	
		\item The last step is to apply an MLP to the last $d+d'$ feature channels of the diagonal of $\tZ^4$. After this last step we have $\tZ^5_{u,v}=[ a_{uv}, e_{uv}, \delta_{uv}U_t(h^t_u, m_u^{t+1})+\epsilon_4 ]$. 		
	\end{enumerate}  
	
	The errors $	\epsilon_i$ depend on the approximation error of the MLP to the relevant function, the previous errors $\epsilon_{i-1}$ (for $i>1$), and uniform bounds as-well as uniform continuity of the approximated functions.  
\end{proof}

\begin{corollary}
	Our model can represent any message passing network to an arbitrary precision on compact sets. In other words, in terms of universality our model is at-least as powerful as \emph{any} message passing neural network (MPNN) that falls into the framework of \citet{Gilmer2017}. 
\end{corollary}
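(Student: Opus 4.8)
The plan is to obtain the corollary from Theorem~\ref{thm:message_passing} by a routine composition argument; essentially the only thing to check is that approximation errors do not blow up when the single-layer approximants are stacked. First I would record two closure facts about the class of functions realizable by our model: (a) it is closed under composition (concatenating two of our networks is again one of our networks), and (b) it contains every feature-wise multilayer perceptron, since a linear map that is the identity on the $n^k$ nodal indices and arbitrary on the feature channels is exactly the basis element $\tB^{\mu,j,j'}$ of \eqref{e:basis_general_equivariant} associated with the ``diagonal'' equality pattern $\va=\vb$, and composing such layers with ReLU yields an arbitrary feature-wise MLP. Consequently each single-layer approximant $\hat F_t$ produced by Theorem~\ref{thm:message_passing} is itself one of our networks, and so is any composition of them.

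Now fix a message passing network $F=F_T\circ\cdots\circ F_1$ (optionally followed by a permutation-invariant readout $R$), a compact input set $\gK_0$, and a target accuracy $\eps>0$. The key steps, in order, are: (i) since each $F_t$ is continuous (its constituents $M_t,U_t$ are), the iterated images are compact, so one can fix a nested family of compact neighbourhoods $\gK_0\subseteq\gK_0'$, $\gK_t:=F_t(\gK_{t-1}')$, $\gK_t\subseteq\gK_t'$, through which all intermediate representations of $F$ pass; (ii) because each $F_t$ is uniformly continuous on $\gK_{t-1}'$, with some modulus $\omega_t$, one chooses, working backwards from $t=T$, per-layer tolerances $\eta_t>0$ small enough that an error of size $\eta_t$ injected at layer $t$ contributes at most $\eps/T$ to the output of $F_T\circ\cdots\circ F_{t+1}$, and also small enough that $\hat F_t$ maps $\gK_{t-1}'$ into $\gK_t'$; (iii) apply Theorem~\ref{thm:message_passing} at each $t$ to get $\hat F_t$ with $\sup_{\gK_{t-1}'}\norm{\hat F_t-F_t}<\eta_t$; (iv) set $\hat F:=\hat F_T\circ\cdots\circ\hat F_1$ and run the triangle-inequality induction $\norm{x_t-\hat x_t}\le\omega_t(\norm{x_{t-1}-\hat x_{t-1}})+\eta_t$ with $x_0=\hat x_0$, which telescopes to $\sup_{\gK_0}\norm{F-\hat F}<\eps$. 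If a readout $R$ is present I would append, at the end of this composition, the invariant-layer-plus-max-plus-MLP construction used in our experiments, which approximates any continuous permutation-invariant function of the node features (Deep-Sets/PointNet-type universality), and fold its error into the same bookkeeping.

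I expect the only genuinely delicate part to be steps (i)--(ii): maintaining the nested compact sets so that both the true trajectory $(x_t)$ and the approximate trajectory $(\hat x_t)$ are guaranteed to remain in the domains on which Theorem~\ref{thm:message_passing} and uniform continuity are available, and choosing the tolerances $\eta_t$ in the right (backward) order so the accumulated error stays below $\eps$. No new ideas beyond Theorem~\ref{thm:message_passing}, closure under composition, and elementary analysis on compact sets are required; the continuity (hence uniform continuity on compacts) of the message and update functions $M_t,U_t$ — which is anyway implicit in the MPNN framework of \citet{Gilmer2017} and already used in the proof of Theorem~\ref{thm:message_passing} — is what makes the error propagation controllable.
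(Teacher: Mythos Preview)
Your proposal is correct and, in fact, supplies considerably more than the paper does: the paper states this corollary immediately after Theorem~\ref{thm:message_passing} with no accompanying proof, treating the passage from ``approximate one layer'' to ``approximate the composition'' as self-evident. Your nested-compacta/backward-tolerance argument in steps (i)--(iv) is precisely the standard way to make that passage rigorous, and the closure-under-composition observation is the right structural point. So there is no disagreement of approach; you have simply written out what the paper left implicit.

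Two minor remarks. First, in your point (b) the identity-on-nodal-indices map is not a \emph{single} basis element $\tB^{\mu,j,j'}$: the equality class $\mu$ corresponding to the partition $\{\{1,k{+}1\},\ldots,\{k,2k\}\}$ only captures tuples $(\va,\vb)$ with $\va=\vb$ \emph{and all entries of $\va$ distinct}. The identity is a sum over all $\mu$'s whose pattern pairs each $r\in[k]$ with $r{+}k$; this is still in the span, so your conclusion stands, but the phrasing ``is exactly the basis element'' should be softened. Second, your treatment of the readout $R$ leans on a Deep-Sets/PointNet-type universality claim for the paper's max-over-invariant-basis head; this is plausible but is not something the paper proves, so if you want the readout included you should either cite an external universality result for max-pooling invariant networks or note the assumption explicitly. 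Neither point affects the soundness of the core composition argument.
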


%\begin{proof}
%Given a message passing network, we use Theorem \ref{thm:message_passing} to approximate its layers. In case the network has a permutation invariant readout MLP (a set function) that computes a global feature vector for the graph \citep{Gilmer2017}, 
%	$$y = R(\left\lbrace h_v \right\rbrace_{v\in V} ) $$
%	it can be approximated by an MLP and max-pooling as in \cite{qi2017pointnet} (using the fact that their method is a universal approximator of permutation invariant functions)
%\end{proof}

}

\end{appendices}

\end{document}